\newtheorem{lem}{Lemma}
\newtheorem{thm}{Theorem}
\newtheorem{cor}{Corollary}
\def\one{\ensuremath{\mathds{1}\xspace}} 
\def\larrow{\ensuremath{\leftarrow}\xspace}
\def\T{\ensuremath{\top}}  
\newtheorem{assump}{Assumption}
\def\bfpi{{\ensuremath{\bm{\pi}}\xspace}}
\def\dt{{\ensuremath{\delta}\xspace} }
\def\sm{{\ensuremath{\setminus}\xspace} }
\def\lfl{\lfloor} 
\def\rfl{\rfloor}
\newcommand{\fr}[2]{ { \frac{#1}{#2} }}
\def\lt{\left}
\def\rt{\right}
\def\gam{{\ensuremath{\gamma}\xspace} }
\newcommand{\vast}{\bBigg@{3}}
\newcommand{\Vast}{\bBigg@{4}}
\def\dsR{{{\mathds{R}}}}
\def\w{{{\mathbf w}}}
\def\x{{{\mathbf x}}}
\def\z{{{\mathbf z}}}
\def\la{{\langle}}
\def\ra{{\rangle}}
\def\calL{\ensuremath{\mathcal{L}}\xspace}
\newcommand{\cmark}{\ding{51}}%
\newcommand{\gyes}{{\color[rgb]{0,.8,0}\cmark}}
\newcommand{\xmark}{\ding{55}}%
\newcommand{\rno}{{\color[rgb]{.8,0,0}\xmark}}
\def\bfell{{{\boldsymbol\ell}}}
\def\p{{{\mathbf p}}}
\def\cD{\ensuremath{\mathcal{D}}\xspace} 
\def\cW{\ensuremath{\mathcal{W}}\xspace} 
\def\e{{{\mathbf e}}}
\def\u{{{\mathbf u}}}
\def\w{{{\mathbf w}}}
\def\KL{\ensuremath{\mbox{KL}}\xspace}  
\def\cB{\ensuremath{\mathcal{B}}\xspace} 
\def\cI{\ensuremath{\mathcal{I}}\xspace} 
\def\cA{\ensuremath{\mathcal{A}}\xspace} 
\def\cJ{\ensuremath{\mathcal{J}}\xspace} 
\def\cM{\ensuremath{\mathcal{M}}\xspace}
\def\l{{{\boldsymbol \ell}}}
\def\hatp{\ensuremath{\widehat p}}
\def\hatbfp{\ensuremath{\widehat{\mathbf{p}}}}
\def\tilg{\ensuremath{\widetilde{g}}}
\def\tilG{\ensuremath{\widetilde{G}}}
\def\bfcI{\ensuremath{\boldsymbol{\mathcal{I}}}} 
\def\Active{\ensuremath{\text{Active}}}
\def\CBCE{\ensuremath{\text{CBCE}}}
\def\Regret{\ensuremath{\text{Regret}}}
\def\Wealth{\ensuremath{\text{Wealth}}}
\def\CB{\ensuremath{\text{CB}}}
\def\M{\ensuremath{\mathbf{M}}} 
\def\dsN{{{\mathds{N}}}}
\begin{document}

%

%

\twocolumn[

\aistatstitle{Improved Strongly Adaptive Online Learning using Coin Betting}

\aistatsauthor{ Kwang-Sung Jun \And Francesco Orabona \And Stephen Wright \And Rebecca Willett}

\aistatsaddress{ UW-Madison \And Stony Brook University \And UW-Madison \And UW-Madison } ]

\vspace{-10pt}
\begin{abstract}
  \vspace{-10pt}
  This paper describes a new parameter-free online learning algorithm for changing environments.
  In comparing against algorithms with the same time complexity as ours, we obtain a strongly adaptive regret bound that is a factor of at least $\sqrt{\log(T)}$ better, where $T$ is the time horizon.
  Empirical results show that our algorithm outperforms state-of-the-art methods in learning with expert advice and metric learning scenarios.
\end{abstract}
\vspace{-15pt}

\section{Introduction}
\vspace{-5pt}

Machine learning has made heavy use of the  i.i.d. assumption, but this assumption does not hold in many applications.
For example, in online portfolio management, stock price trends can vary unexpectedly, and the ability to track  changing trends and adapt to them are crucial in maximizing one's profit.
Another example is seen in  product reviews, where words describing product quality may change over time as products and customer's taste evolve.
Keeping track of the changes in the metric describing the relationship between review text and rating is crucial for improving analysis and quality of recommendations.

We consider the problem of adapting to a changing environment in the online learning context.
Let $\cD$ be the decision space, $\calL$ be loss functions that map $\cD$ to $\dsR$, and
$T$ be the target time horizon.
Let $\cA$ be an online learning algorithm and $\cW \subseteq \cD$ be the set of comparator decisions. (Often, $\cW = \cD$.)
We define the online learning problem in Figure~\ref{fig:ol}.
The usual goal of online learning is to find a strategy that compares favorably with the best fixed comparator in $\cW$, in hindsight.
Specifically, we seek a low value of the following (cumulative) static regret objective: 
$
\mbox{Regret}^\cA_T := \sum_{t=1}^T f_t(\x^\cA_t) - \min_{\w \in \cW} \sum_{t=1}^T f_t(\w).
$
\begin{figure}
\fbox{\begin{minipage}{223pt}
At each time $t = 1,2,\ldots,T$,
\begin{itemize}[topsep=2pt,itemsep=0ex,partopsep=1ex,parsep=1ex]
  \item The learner $\cA$ picks a decision $\x^{\cA}_t \in \cD$.
  \item The environment reveals a loss function $f_t \in \calL$.
  \item The learner $\cA$ suffers loss $f_t(\x^{\cA}_t)$.
\end{itemize}
\end{minipage}}
\vspace{-4pt}
\caption{Online learning protocol}
\label{fig:ol}
\vspace{-36pt}
\end{figure}
When the environment is changing,  static regret is not a suitable measure, since it compares the learning strategy against a decision that is fixed. We need to make use of stronger notions of regret that allow for comparators to change over time. To define such notions, we introduce the notation
$[T] := \{1, \ldots,T\}$ and $[A..B] = \{A, A+1, \ldots, B\}$.
Daniely et al.~\cite{daniely15strongly} defined {\em strongly adaptive regret (SA-Regret)},  which requires an algorithm to have low (cumulative) regret over any contiguous time interval $I = [I_1..I_2] \subseteq [T]$.\footnote{
  Strongly adaptive regret is similar to the notion of adaptive regret introduced by~\cite{hazan07adaptive}, but emphasizes the dependency on the interval length $|I|$.
}
Another notion, $m$-shift regret~\cite{herbster98tracking}, measures instead the regret w.r.t. a comparator that changes at most $m$ times in $T$ time steps.
Note that the SA-Regret is a stronger notion than the $m$-shift regret since the latter can be derived directly from the former~\cite{luo15achieving,daniely15strongly}, as we show in our supplementary material.
We  define SA-Regret and $m$-shift regret precisely in Section~\ref{sec:prelim}.

Several generic online algorithms that adapt to a changing environment have been proposed recently.
Rather than being designed for a specific learning problem, these are ``meta algorithms'' that take \emph{any} online learning algorithm as a black-box and turn it into an adaptive one.
We summarize the SA-Regret of existing meta algorithms in Table~\ref{tab:regret-meta-sa}.
In particular, the pioneering work of Hazan et al.~\cite{hazan07adaptive} introduced the \emph{adaptive regret}, that is a slightly weaker notion than the SA-Regret, and proposed two meta algorithms called FLH and AFLH. 
However, their SA-Regret depends on $T$ rather than $|I|$.
The SAOL  approach of~\cite{daniely15strongly} improves the SA-Regret to $O\lt(\sqrt{(I_2-I_1)\log^2(I_2)}\rt)$.

\begin{table*}
     {\centering\footnotesize
\begin{tabular}{|c|c|c|c|c|c|c|} \hline
Algorithm   & $m$-shift regret & Time               & Agnostic to $m$ \\ \hline
Fixed Share~\cite{herbster98tracking,cesa-bianchi12mirror} & $\sqrt{mT(\log N + \log T)}$ & $NT$     & \rno           \\
           & $\sqrt{m^2 T(\log N + \log T)}$ & $NT$ & \gyes           \\ \hline
GeneralTracking$\la$EXP$\ra$~\cite{gyorgy12efficient} & $\sqrt{mT(\log N + m\log^2 T)}$ & $NT\log T$&\gyes\\
  & $\sqrt{mT(\log N + \log^2 T)}$ & $NT\log T$ & \rno\\ 
($\gam\in(0,1)$) & $\sqrt{\fr{1}{\gam}mT(\log N + m\log T)}$ & $NT^{1+\gam}\log T$ & \gyes\\ 
                 & $\sqrt{\fr{1}{\gam}mT(\log N + \log T)}$ & $NT^{1+\gam}\log T$ &\rno\\ \hline
ATV~\cite{luo15achieving} & $\sqrt{mT(\log N + \log T)}$ & $NT^2$ & \gyes       \\ \hline
SAOL$\la$MW$\ra~\cite{daniely15strongly}$ & $\sqrt{mT(\log N + \log^2 T)}$ & $NT\log T$        & \gyes       \\ \hline
\CBCE$\la\CB\ra$ (ours)      & $\sqrt{mT(\log N + \log T)}$ & $NT\log T$   & \gyes       \\ \hline
\end{tabular}
\vspace{-5pt}
\caption{$m$-shift regret bounds of LEA algorithms. 
  Our proposed algorithm achieves the best regret among those with the same time complexity and does not need to know $m$.
  Each quantity omits constant factors.  
  Agnostic to $m$ means that an algorithm does not need to know the number $m$ of switches in the best expert. 
}
\label{tab:tracking}
}
\vspace{-12pt}
\end{table*} 
\begin{table}
  {\footnotesize\centering
\begin{tabular}{|c|c|c|c|} \hline
Algorithm & SA-Regret order                          & Time factor \\ \hline
FLH~\cite{hazan07adaptive}    & $ \sqrt{T \log T}$                 & $T$         \\  
AFLH~\cite{hazan07adaptive}   & $ \sqrt{T \log T } \log (I_2-I_1)$ & $\log T $   \\  \hline
SAOL~\cite{daniely15strongly} & $ \sqrt{(I_2 - I_1)\log^2 (I_2)}$  & $\log T $   \\  \hline
$\CBCE$ (ours)                         & $ \sqrt{(I_2-I_1)\log (I_2)}$      & $\log T $ \\  \hline
\end{tabular}
\vspace{-5pt}
\caption{SA-Regret bounds of meta algorithms on $I \subseteq [T]$.
  Our proposed algorithm achieves the best SA-Regret.
  We show the part of the regret due to the meta algorithm only, not the black-box.
  The last column is the multiplicative factor in the time complexity introduced by the meta algorithm.
}
\label{tab:regret-meta-sa}
}
\vspace{-8pt}
\end{table}

In this paper, we propose a new meta algorithm called {\em Coin Betting for Changing Environment} ($\CBCE$) that combines the sleeping bandits idea~\cite{blum97empirical,freund97using} with the Coin Betting (CB) algorithm~\cite{orabona16from}.
The SA-Regret of CBCE is better by a factor $\sqrt{\log(I_2)}$ than that of SAOL, as shown in Table~\ref{tab:regret-meta-sa}. 
We present our extension of CB to sleeping bandits and prove its regret bound in Section~\ref{sec:coinbetting}. This result leads to the improved SA-Regret bound of CBCE in Section~\ref{sec:cbce}.

Our improved bound yields a number of improvements in various online
learning problems. In describing these improvements, we denote by
$\cM\la\cB\ra$ a complete algorithm assembled from meta algorithm $\cM$ and black-box
$\cB$.

Consider the learning with expert advice (LEA) problem with $N$
experts.  CBCE with black-box CB ($\CBCE\la\CB\ra$, in our notation)
has the $m$-shift regret
\vspace{-5pt}
\begin{align*}
  O\sqrt{mT(\log N + \log T)} 
\end{align*}
and  time complexity $O(NT \log T)$.
This regret is a factor $\sqrt{\log T}$ better than  those algorithms with the same time complexity. 
Although AdaNormalHedge.TV (ATV) and Fixed Share achieve the same regret, the former has larger time complexity, and the latter requires prior knowledge of the number of shifts $m$.
We summarize the $m$-shift regret bounds of various algorithms in Table~\ref{tab:tracking}.

In Online Convex Optimization (OCO) with $G$-Lipschitz loss functions over a convex set $D \in \dsR^d$ of diameter $B$, online gradient descent has regret $O(BG\sqrt{T})$.
$\CBCE$ with black-box OGD  ($\CBCE\la\text{OGD}\ra$) then has the following SA-Regret:
\vspace{-5pt}
\begin{align*}
O((BG + \sqrt{\log(I_2)})\sqrt{|I|}) \;,
\end{align*}
which improves by a factor $\sqrt{\log(I_2)}$ over SAOL$\la$OGD$\ra$.

In Section~\ref{sec:expr}, we compare $\CBCE$ empirically to a number of meta algorithms within a changing environment in two online learning problems: $(i)$ LEA and $(ii)$ Mahalanobis metric learning.
We observe that $\CBCE$ outperforms the state-of-the-art methods in both tasks, thus confirming our theoretical findings.

\vspace{-10pt}
\subsection{Preliminaries}
\label{sec:prelim}
\vspace{-5pt}

In this section we define some concepts that will be used in the rest of the paper.

A learner's SA-Regret is obtained by evaluating static regret on all (contiguous) time intervals $I = [I_1..I_2] \subseteq [T]$ of a given length $\tau$.
Specifically, the SA-Regret of an algorithm $\cA$ at time $T$ for length $\tau$ is
\vspace{-3pt}
\begin{align}\label{sa-regret} 
&\mbox{SA-Regret}_T^\cA(\tau) \notag \\
&:= \max_{I \subseteq [T]: |I| = \tau} \lt( \sum_{t\in I} f_t(\x^\cA_t) - \min_{\w\in\cW} \sum_{t\in I} f_t(\w) \rt) \;.
\end{align}
We call an algorithm \emph{strongly adaptive} if it has a low value of SA-Regret.
We call $\w_{1:T} := \{\w_1, \ldots, \w_T\}$ an \emph{$m$-shift sequence} if it changes at most $m$ times, that is,  $\sum_{j=1}^{T-1} \one\{\w_j \neq \w_{j+1}\} \le m$.
We define
\vspace{-3pt}
\begin{align}
  &m\mbox{-Shift-Regret}^\cA_T \notag \\
  &:= \sum_{t=1}^T f_t(\x^\cA_t) - \min_{\w_{1:T} \in \cW^{T} \;:\; m\mbox{-shift seq.}} \sum_{t=1}^{T} f_t(\w_t) \;.
\end{align}

\section{A Meta Algorithm for Changing Environments}
\label{sec:meta}
\vspace{-5pt}

Let $\cB$ be a black-box online learning algorithm following the protocol in Figure~\ref{fig:ol}.
A trick commonly used in designing a meta algorithm $\cM$ for changing environments is to initiate a new instance of $\cB$ at every time step~\cite{hazan07adaptive,gyorgy12efficient,adamskiy12acloser}. 
That is, we  run $\cB$ independently for each interval $J$ in $\{[t..\infty] \mid t = 1,2,\ldots\}$.
Denote by $\cB_J$ the run of black-box $\cB$ on interval $J$.
A meta algorithm at time $t$ combines the decisions from the runs $\{\cB_J\}_{J \ni t}$ by weighted average.
The key idea is that at time $t$, some of the outputs $\cB \in \{\cB_J\}_{J\ni t}$ are not based on any data prior to time $t' < t$, so that if the environment changes at time $t'$,  those outputs may be given a larger weight by the meta algorithm, allowing it to  adapt more quickly to the change.
This trick requires updating of $t$ instances of the black-box algorithm at each time step $t$, leading to a factor-of-$t$ increase in the time complexity.
This factor can be reduced to $O(\log t)$ by restarting black-box algorithms on a carefully designed set of intervals such as the geometric covering intervals~\cite{daniely15strongly} (GC) or the data streaming technique~\cite{hazan07adaptive,gyorgy12efficient} (DS) that is a special case of a more general set of intervals considered in~\cite{veness13partition}.
While both GC and DS achieve the same goal as we show in our supplementary material,\footnote{Except for a subtle case, which we also discuss in our supplementary material.} we use the former as our starting point for ease of exposition.   

\vspace{-7pt}
\paragraph{Geometric Covering Intervals.}

Define $
  \cJ_k := \{[ \lt(i\cdot 2^k\rt) .. \lt((i+1)\cdot2^k - 1\rt) ]: i \in \mathds{N}\}$, $\forall k \in \{0,1,\ldots\}$ to be the collection of intervals of length $2^k$.
The geometric covering intervals~\cite{daniely15strongly} are 
\vspace{-5pt}
\begin{align*}
  \cJ := \bigcup_{k\in\{0,1,\ldots\}} \cJ_k \;.
\end{align*}
That is, $\cJ$ is the set of intervals of doubling length, with intervals of size $2^k$ exactly partitioning the set $\mathds{N} \sm \{1,\ldots,2^k-1\}$, see Figure~\ref{fig:gc}.

Define the set of intervals that includes time $t$ as $\mbox{Active}(t) := \{J\in\cJ: t \in J\} $.
One can easily show that $|\mbox{Active}(t)| = \lfl\log_2(t)\rfl + 1$.
Since at most $O(\log (t))$ intervals contain any given time point $t$, the time complexity of the meta algorithm is a factor $O(\log(t))$  larger than that of the black-box $\cB$.

The key result of the geometric covering intervals strategy is the following Lemma from~\cite{daniely15strongly}, which shows that  an arbitrary interval $I$ can be partitioned into a sequence of smaller blocks whose lengths successively double, then successively halve.
\vspace{.5em}
\begin{lem}\emph{(\cite[Lemma~5]{daniely15strongly})} \label{lem:geo-intv}
  Any interval $I \subseteq \mathds{N}$ can be partitioned into two finite sequences of disjoint and consecutive intervals, denoted $\{J^{(-a)}, J^{(-a+1)}, \ldots, J^{(0)}\}$ and $\{J^{(1)}, J^{(2)}, \ldots, J^{(b)}\}$ where $ \forall i \in [(-a)..b]$, we have   $J^{(i)} \in \cJ$ and $J^{(i)} \subset I$, such that
\vspace{-5pt}
  \begin{align*}
    |J^{(-i)}| / |J^{(-i+1)}| & \le 1/2, \quad i=1,2,\dotsc,a; \\
    |J^{(i+1)}| / |J^{(i)}| & \le 1/2, \quad i=1,2,\dotsc,b-1~.
  \end{align*}
\end{lem}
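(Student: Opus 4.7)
The plan is to prove the lemma by a single greedy left-to-right sweep. Starting from $S_1 = I_1$, at each step $i$ I would let $J_i$ be the largest element of $\cJ$ that starts at $S_i$ and fits inside $I$; equivalently, writing $\nu_2(n)$ for the 2-adic valuation of $n$ and $R_i := I_2 - S_i + 1$ for the remaining length, I would set $|J_i| = 2^{k_i}$ with $k_i := \min\{\nu_2(S_i),\ \lfloor \log_2 R_i \rfloor\}$ and advance $S_{i+1} = S_i + 2^{k_i}$, stopping when $S_{i+1} > I_2$. That $J_i \in \cJ_{k_i} \subseteq \cJ$ and $J_i \subseteq I$ is immediate from the choice of $k_i$, and the resulting pieces form a disjoint consecutive partition of $I$ into $n$ blocks by construction.

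The heart of the argument is to show that the exponent sequence $(k_i)_{i=1}^{n}$ is unimodal with the required doubling (on the left) and halving (on the right) rates. I would call step $i$ \emph{ascending} if $\nu_2(S_i) \le \lfloor \log_2 R_i \rfloor$ (valuation-limited) and \emph{descending} otherwise (space-limited), and then establish three facts: (a) at an ascending step, $S_i / 2^{k_i}$ is odd by maximality of $\nu_2(S_i)$, so $\nu_2(S_{i+1}) \ge k_i + 1$; (b) consequently two consecutive ascending steps satisfy $k_{i+1} \ge k_i + 1$; and (c) once the process enters a descending step it never leaves, and from that point $k_{i+1} \le k_i - 1$. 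Fact (c) is the technical one: in a descending step $R_i < 2^{k_i+1}$, so $R_{i+1} < 2^{k_i}$ forces $\lfloor \log_2 R_{i+1} \rfloor \le k_i - 1$, while the relation $2^{k_i+1}\mid S_i$ (which holds precisely in the descending regime) yields $\nu_2(S_{i+1}) = k_i$, so the valuation bound is not the binding one.

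With those facts in hand, I would let $p$ be the last ascending index (or $p := 1$ if no step is ascending) and define $J^{(0)} := J_p$, $J^{(-j)} := J_{p-j}$ for $1 \le j \le a := p - 1$, and $J^{(j)} := J_{p+j}$ for $1 \le j \le b := n - p$. Fact (b) then gives the left-side bounds $|J^{(-i)}|/|J^{(-i+1)}| \le 1/2$ for $i = 1, \dotsc, a$, and fact (c) gives the right-side bounds $|J^{(i+1)}|/|J^{(i)}| \le 1/2$ for $i = 1, \dotsc, b-1$, which is exactly what the lemma requires.

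The main subtlety to track --- and the main obstacle --- is the transition itself: at the single index where the process first switches from ascending to descending, one may have $k_{p+1} = k_p$. This is precisely why the lemma imposes no constraint between $|J^{(0)}|$ and $|J^{(1)}|$, and why I would pick $p$ to be the \emph{last} ascending index rather than the index of the maximum; any potential ``flat'' step is then harmlessly absorbed into the unconstrained gap. Beyond this bookkeeping, the whole argument reduces to elementary manipulations of 2-adic valuations and $\lfloor \log_2 \rfloor$.
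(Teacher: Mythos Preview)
The paper does not prove this lemma; it is quoted verbatim from \cite[Lemma~5]{daniely15strongly} and used as a black box, so there is no in-paper argument to compare against. (The only related argument the paper does give is the proof of Lemma~\ref{lem:datastreaming} for the data-streaming intervals, which is likewise a greedy left-to-right sweep driven by the $2$-adic valuation of the current left endpoint --- so your approach is very much in the same spirit.)

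Your proposal is correct. The three facts (a)--(c) are exactly right: in the ascending (valuation-limited) regime $S_i/2^{k_i}$ is odd so $\nu_2(S_{i+1})\ge k_i+1$; in the descending (space-limited) regime $2^{k_i+1}\mid S_i$ forces $\nu_2(S_{i+1})=k_i$ while $R_{i+1}<2^{k_i}$ forces $\lfloor\log_2 R_{i+1}\rfloor\le k_i-1$, so the process stays descending with $k_{i+1}\le k_i-1$. Placing the split at the last ascending index $p$ and using that the lemma imposes no constraint between $|J^{(0)}|$ and $|J^{(1)}|$ is the right move.

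One small inaccuracy worth fixing in the write-up: at the ascending-to-descending transition you say ``one may have $k_{p+1}=k_p$,'' but in fact $k_{p+1}$ can strictly exceed $k_p$. For instance $I=[6..12]$ gives $S_1=6$, $k_1=\nu_2(6)=1$ (ascending), then $S_2=8$, $R_2=5$, $k_2=\lfloor\log_2 5\rfloor=2$ (descending), so $k_2>k_1$. This does not break your argument at all --- the unconstrained gap between $J^{(0)}$ and $J^{(1)}$ absorbs any relation between $k_p$ and $k_{p+1}$, not just a flat one --- but the sentence as written undersells what can happen there.
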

\begin{figure}[t]
{\fontsize{7.3}{9.3}
\begin{Verbatim}[commandchars=\\\{\},codes={\catcode`$=3\catcode`_=8}]
\;   1  2  3  4  5  6  7  8  9 10 11 12 13 14 15 16 17 18 ...
$\cJ_0$[ ][ ][ ][ ][ ][ ][ ][ ][ ][ ][ ][ ][ ][ ][ ][ ][ ][ ]...
$\cJ_1$   [    ][    ][    ][    ][    ][    ][    ][    ][  ... 
$\cJ_2$         [          ][          ][          ][        ...
$\cJ_3$                     [                      ][        ...
\end{Verbatim}
}
\vspace{-15pt}
\caption{Geometric covering intervals. Each interval is denoted by {\tt [ ]}.}
\label{fig:gc}
\vspace{-12pt}
\end{figure}
%
\paragraph{Regret Decomposition.}
We show now how to use the geometric covering intervals to decompose the SA-Regret of a complete algorithm $\cM\la\cB\ra$. We use the notation 
\vspace{-5pt}
$$R^\cA_I(\w) := \sum_{t\in I} f_t(\x^\cA_t) - \sum_{t\in I} f_t(\w)\;,$$
and from \eqref{sa-regret} we see that 
 $\mbox{SA-Regret}_T^\cA(\tau) = \max_{I \subseteq [T]: |I|=\tau } \max_{\w\in\cW} R^\cA_I (\w)$.

Denote by $\x^{\cB_J}_t$ the decision from black-box $\cB_J$ at time $t$ and by $\x^{\cM\la\cB\ra}_t$ the combined decision of the meta algorithm.
Since $\cM\la\cB\ra$ is a combination of a meta $\cM$ and a black-box $\cB$, its regret depends on both $\cM$ and $\cB$. 
Perhaps surprisingly, we can decompose the two sources of regret additively through the geometric covering $\cJ$, as we now describe.
Choose some  $I \subseteq [T]$, and let  $\bigcup_{i=-a}^{b} J^{(i)}$ be the partition of $I$
obtained from Lemma~\ref{lem:geo-intv}.
Then, the regret of $\cM\la\cB\ra$ on $I$ can be  decomposed as follows:
\vspace{-5pt}
\begin{align}
R^{\cM\la\cB\ra}_I(\w)  &= \sum_{t\in I} \lt(f_t(\x^{\cM\la\cB\ra}_t) - f_t(\w) \rt) \notag \\
&= \sum_{i=-a}^{b} \Bigg(\sum_{t\in J^{(i)}} f_t(\x^{\cM\la\cB\ra}_t) - f_t(\x^{\cB_{J^{(i)}}}_t) + \notag\\ 
&  \qquad f_t(\x^{\cB_{J^{(i)}}}_t) - f_t(\w) \Bigg) \notag\\
&= \underbrace{
      \sum_{i=-a}^{b} \underbrace{ \sum_{t\in J^{(i)}} \lt(f_t(\x^{\cM\la\cB\ra}_t) - f_t(\x^{\cB_{J^{(i)}}}_t) \rt) }_{=:\text{\small  (meta regret on $J^{(i)}$)}} 
   }_{=:\text{\small (meta regret on $I$)}}
      + \notag\\
  &\quad \sum_{i=-a}^{b} \underbrace{ \sum_{t\in J^{(i)}} \lt(f_t(\x^{\cB_{J^{(i)}}}_t) - f_t(\w)\rt) }_{=:\text{\small  (black-box regret on $J^{(i)}$)}} \;. \label{regret-decomposition}
\end{align}
(We purposely use symbol $J$ for intervals in $\cJ$ and  $I$ for a generic interval that is not necessarily in $\cJ$.)
The black-box regret on $J=[J_1..J_2]\in \cJ$ is exactly the standard regret for $T = |J|$, since the black-box run $\cB_J$ was started from time $J_1$.
Thus, in order to prove that a meta algorithm $\cM$ suffers low SA-Regret, we must show two things.
\begin{enumerate}[topsep=0pt,itemsep=0ex,partopsep=1ex,parsep=1ex]
  \item $\cM$ has low regret on interval $J\in\cJ$.
  \item The outer sum over $i$ in~\eqref{regret-decomposition} is  small for both the meta and the black-box.
\end{enumerate}
Daniely et al.~\cite{daniely15strongly} address the second issue above efficiently in their analysis.
They show that if the black-box regret on $J^{(i)}$ scales like $\tilde O(\sqrt{|J^{(i)}|})$ (where $\tilde O$ ignores logarithmic factors), then the second double summation of~\eqref{regret-decomposition} is\footnote{
  This is essentially the same argument as the ``doubling trick'' described in~\cite[Section~2.3]{cesa-bianchi06prediction}
} $\tilde O(\sqrt{|I|})$, which is perhaps the best one can hope for.
The same holds true for the meta algorithm.
Thus, it remains to focus on the first issue above, which is our main contribution. 

In the next two sections, we show how to design our meta algorithm. 
In Section~\ref{sec:coinbetting} we propose a novel method that incorporates sleeping bandits and the coin betting framework.
Section~\ref{sec:cbce}  describes how our method can be used as a meta algorithm with strongly adaptive regret guarantees.

\vspace{-5pt}
\section{Coin Betting Meets Sleeping Experts} 
\label{sec:coinbetting}
\vspace{-5pt}

Our  meta algorithm is an extension of the coin-betting framework~\cite{orabona16from} based on sleeping experts~\cite{blum97empirical,freund97using}. It is parameter-free (there is no explicit learning rate) and has near-optimal regret.
Our construction, described below, might also be of independent interest.

\vspace{-7pt}
\paragraph{Sleeping Experts.}
In the learning with expert advice (LEA) framework,
 the decision set is $\cD = \Delta^N$, an $N$-dimensional probability simplex of weights assigned to the experts.
To distinguish LEA from the general online learning problem, we use notation $\p_t$ in place of $\x_t$ and $h_t$ in place of $f_t$.
Let $\bfell_t := (\ell_{t,1}, \ldots, \ell_{t,N})^\T \in [0,1]^{N}$ be the vector of loss values of experts at time $t$ that is provided by the environment.
The learner's loss function is $h_t(\p) := \p^\T \bfell_t$.

Since $\p \in \cD$ is a probability vector, the learner's decision can be viewed as hedging between the $N$ alternatives. 
Let $\e_i$ be an indicator vector for dimension $i$; e.g., $\e_2 = (0,1,0,\ldots,0)^\T$.
In this notation, the comparator set $\cW$ is $\{\e_1, \ldots, \e_N\}$.
Thus, the learner aims to compete with a strategy that commits to a single expert for the entire time $[1..T]$.

The decision set may be nonconvex, for example, when $\cD = \{\e_1, \ldots, \e_N\}$ \cite[Section~3]{cesa-bianchi06prediction}.
In this case, no hedging is allowed; the learner must pick an expert.
To choose an element of this set, one could first choose an
element $p_t$ from $\Delta^N$, then make a decision  $\e_i \in \cD$ with probability $p_{t,i}$.
For such a scheme, the  regret guarantee is the same as in  the standard LEA, but with \emph{expected} regret.

Recall that each black-box run $\cB_J$ is on a different interval $J$.
The meta algorithm's role is to hedge  bets over multiple black-box runs.
Thus, it is natural to treat each run $\cB_J$ as an \emph{expert} and use an existing LEA algorithm to combine decisions from each expert $\cB_J$.
The loss incurred on run $\cB_J$ is $\ell_{t,\cB_J} := f_t(\x^{\cB_J}_t)$.

The challenge is that each expert $\cB_J$ may not output decisions at time steps outside the interval $J$.
This problem can be reduced to the \emph{sleeping experts} problem studied in~\cite{blum97empirical,freund97using}, where experts are not required to provide decisions at every time step; see~\cite{luo15achieving} for detail. 
We introduce a binary indicator variable $\cI_{t,i} \in \{0,1\}$, which is set to $1$ if expert $i$ is awake (that is, outputting a decision) at time $t$, and zero otherwise.
Define  $\bfcI_t := [\cI_{t,1},\cI_{t,2},\ldots,\cI_{t,N} ]^\T$ where $N$ can be countably infinite.
Note that the algorithm is aware of $\bfcI_t$ and must assign zero weight to the experts that are sleeping: $\cI_{t,i} = 0 \implies p_{t,i} = 0$.
We would like to have a guarantee on the regret w.r.t. expert $i$, but only for the time steps where expert $i$ is awake.
Following~\cite{luo15achieving}, we aim to have a regret bound w.r.t. $\u\in\Delta^N$ as follows:
\vspace{-5pt}
\begin{equation}\label{regret-sleeping}
  \Regret_T(\u) := \sum_{t=1}^T\sum_{i=1}^N \cI_{t,i} u_i(\langle \bfell_t,\p_t\rangle - \ell_{t,i}) \;.
\end{equation}
If we set $\u=\e_j$ for some $j$, the above is simply regret w.r.t. expert $j$ while that expert is awake.
Furthermore, if $\cI_{t,j} =1 $ for all $t \in [T]$, then it recovers the standard static regret in LEA.

\vspace{-5pt}
\paragraph{Coin Betting for LEA.}
We consider the coin betting framework of Orabona and P\'{a}l \cite{orabona16from}, where one can construct an LEA algorithm based on the so-called \emph{coin betting potential} function $F_t$.
A player starts from the initial endowment $1$.
At each time step, the adversary tosses a coin arbitrarily, with the player deciding upon which side to bet (heads or tails). Then the outcome is revealed.
The adversary can manipulate the weight of the coin in $[0,1]$ as well, in a manner not known to the player before betting.

We encode a coin flip at iteration $t$ as $\tilg_t\in[-1,1]$ where positive (negative) means heads (tails), and $|\tilg_t|$ indicates the weight.
Let $\Wealth_{t-1}$ be the total money the player possesses after time step $t-1$. 
The player decides which side and how much money to bet.
We encode the player's decision as the signed betting fraction $\beta_t \in (-1,1)$, where the positive (negative) sign indicates head (tail) and the absolute value $|\beta_t| \in [0,1)$ indicates the fraction of his money to bet.
Thus, the actual amount of betting is $w_t := \beta_t \Wealth_{t-1}$.
Once the weighted coin flip $\tilg_t$ is revealed, the player's wealth changes: $\Wealth_t = \Wealth_{t-1} + \tilg_t \beta_t \Wealth_{t-1}$.
The player makes (loses) money when the betted side is correct (wrong), and the amount of wealth change depends on both the flip weight $|\tilg_t|$ and his betting amount $|\beta_t|$.

In the coin betting framework, the betting fraction $\beta_t$ is determined by a potential function $F_t$, and we can simplify $w_t$ as follows:
\vspace{-7pt}
\begin{align}
z_t & := \sum_{\tau=1}^{t-1} \tilg_\tau \\
  \beta_t(z_t) &:= \fr{F_t(z_t + 1) - F_t(z_t - 1)}{F_t(z_t + 1) + F_t(z_t - 1)} \label{betting-fraction}\\
  w_t &= \beta_t(z_t) \cdot \lt( 1 + \sum_{\tau=1}^{t-1} \tilg_\tau w_\tau \rt) \label{betting-amount} \;.
\end{align}
We use $\beta_t$ in place of $\beta_t(\sum_{\tau=1}^{t-1} \tilg_\tau)$ when it is clear from the context.
A sequence of coin betting potentials $F_1, F_2, \ldots$ satisfies
the following key condition (the complete list of conditions can be found in~\cite{orabona16from}): $F_t$ must lower-bound the wealth of a player who bets by~\eqref{betting-fraction}:
\vspace{-8pt}
\begin{align}\label{wealth-lowerbound} 
 \forall t,\; F_t\left(\sum_{\tau=1}^t \tilg_\tau\right) \le 1 + \sum_{\tau=1}^{t} \tilg_\tau w_{\tau} \;.
\end{align}
This bound becomes useful in regret analysis.
We emphasize that the term $w_t$ is decided before $\tilg_t$ is revealed, yet the inequality  \eqref{wealth-lowerbound} holds for any $\tilg_t \in [-1,1]$.

Orabona and P\'{a}l~\cite{orabona16from} have devised a reduction of LEA to the simple coin betting problem described above.
The idea is to instantiate a coin betting problem for each expert $i$ where the signed coin flip $\tilg_{t,i}$ is set as a conditionally truncated regret w.r.t. expert $i$, rather than being set by an adversary.
We denote by $\beta_{t,i}$ the betting fraction for expert $i$ and by $w_{t,i}$ the amount of betting for expert $i$, $\forall i\in[N]$.

We apply the same treatment under the sleeping experts setting and propose a new algorithm \textbf{Sleeping CB}.
Since some experts may not output a decision at time $t$, Sleeping CB requires a different definition of $\beta_{t}$.
We define $S_{t,i} := 1+\sum_{\tau=1}^{t-1} \cI_{\tau,i}$ and define the following modifications of \eqref{betting-fraction}
\vspace{-5pt}
\begin{align*}
  z_{t,i}        &:= \sum_{\tau=1}^{t-1} \cI_{t,i}\tilg_{\tau,i} \\
  \beta_{t,i}(z_{t,i}) &:= \fr{F_{S_{t,i}}(z_{t,i} + 1) - F_{S_{t,i}}(z_{t,i} - 1) }{F_{S_{t,i}}(z_{t,i} + 1) + F_{S_{t,i}}(z_{t,i} - 1)} \;.
\end{align*}
Further, we denote by $\bfpi_{\bfcI_t}$ the prior $\bfpi$ restricted to experts that are awake ($\cI_{t,i}=1$), and define $[x]_+ := \max\{x, 0\}$.
Algorithm~\ref{alg:cblea-sleeping} specifies the Sleeping CB algorithm.

\begin{algorithm}[t]
{\small
\begin{algorithmic}
  \STATE \textbf{Input}: Number of experts $N$, prior distribution $\bfpi \in \Delta^{N}$
  \vspace{-7pt}
  \FOR {$t=1,2,\ldots$}
    \STATE For each $i\in \text{Active}(t)$, set \\
    \quad $w_{t,i} \larrow \beta_{t,i}(z_{t,i}) \cdot (1 + \sum_{\tau=1}^{t-1} \cI_{\tau,i} \tilg_{\tau,i} w_{\tau,i}) $. 
    \STATE For each $i\in \text{Active}(t)$, set $\hatp_{t,i} \larrow   \pi_i \cI_{t,i} [w_{t,i}]_+$.
    \STATE Predict with $\p_t \larrow \begin{cases} 
      \hatbfp_t / ||\hatbfp_t||_1 & \text{ if } ||\hatbfp_t||_1 > 0 \\
      \bfpi_{\bfcI_t}             & \text{ if } ||\hatbfp_t||_1 = 0. 
    \end{cases}  $
    \STATE Receive loss vector $\bfell_t \in [0,1]^N$.
    \STATE The learner suffers loss $h_t(\p_t) = \la \bfell_t, \p_t \ra_{\bfcI_t}$.
    \STATE For each $i \in \text{Active}(t)$, set \\
    \qquad$\tilg_{t,i} \larrow \begin{cases}
          h_t(\p_t) - \ell_{t,i}      & \text{ if } w_{t,i} > 0 \\
          [h_t(\p_t) - \ell_{t,i}]_+  & \text{ if } w_{t,i} \le 0.
        \end{cases}$
  \ENDFOR
\end{algorithmic}
\caption{Sleeping CB} 
\label{alg:cblea-sleeping}
}
\end{algorithm}

 The regret of Sleeping CB is bounded in Theorem~\ref{thm:cblea-sleeping}.
(All  proofs appear as supplementary material.)
Unlike the standard CB, in which all the experts use $F_t$ at time $t$, expert $i$ in Sleeping CB uses $F_{S_{t,i}}$, which is different for each expert.
For this reason, the proof of the CB regret in~\cite{orabona16from} does not transfer easily to the regret~\eqref{regret-sleeping} of Sleeping CB, and a solution to it is the cornerstone of an improved strongly adaptive regret bound.
\begin{thm} \label{thm:cblea-sleeping}
  \emph{(Regret of Sleeping CB)}
  Let $\{F_t\}_{t\ge1}$ be a sequence of potential functions that satisfies~\eqref{wealth-lowerbound}.
  Assume that $F_t$ is even (symmetric around zero) $\forall t \ge 1$.
  Suppose $\log F_{S_{T,i}} (x) \ge h_{S_{T,i}}(x) := c_1 \fr{x^2}{S_{T,i}} + c_{2,i}$ for some $c_1>0$ and $c_{2,i}\in\dsR$ for all $i\in[N]$.
  Then, Algorithm~\ref{alg:cblea-sleeping} satisfies
  \begin{align*}
 &\emph{\text{Regret}}_T(\u)  \\
 &\le \sqrt{ c_1^{-1} \cdot \lt( \sum_{i=1}^N u_i S_{T,i} \rt) \cdot \lt( \text{\emph{KL}}(\u || \bfpi) - \sum_{i=1}^N u_i c_{2,i} \rt) } \;. 
\end{align*}
\end{thm}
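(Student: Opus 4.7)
The plan is to mimic the standard Orabona–Pál coin-betting proof for LEA, but with each expert running its own local clock $S_{t,i}$ and its own potential $F_{S_{t,i}}$. The proof has four main stages: (i) a per-round "neutralization" inequality that says the weighted betting amount, summed against $\tilg_{t,i}$, is non-positive; (ii) applying the per-expert wealth lower bound~\eqref{wealth-lowerbound}; (iii) combining the per-expert inequalities through the prior $\bfpi$ and Jensen's inequality on $\log$; (iv) converting the resulting bound on $\sum_i u_i z_{T+1,i}^2/S_{T+1,i}$ to a bound on $\text{Regret}_T(\u)$ via Cauchy--Schwarz.

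The key per-round lemma is that $\sum_{i} \pi_i \cI_{t,i}\,\tilg_{t,i}\,w_{t,i} \le 0$. To see why, split the sum over awake experts according to the sign of $w_{t,i}$. When $w_{t,i} > 0$, $\tilg_{t,i} = h_t(\p_t) - \ell_{t,i}$ and $\hatp_{t,i} = \pi_i w_{t,i}$; when $w_{t,i} \le 0$, $\tilg_{t,i} w_{t,i} = [h_t(\p_t)-\ell_{t,i}]_+ \, w_{t,i} \le 0$. Combining the two cases,
\[
\sum_i \pi_i \cI_{t,i}\,\tilg_{t,i} w_{t,i} \;\le\; \sum_i \pi_i \cI_{t,i}(h_t(\p_t)-\ell_{t,i})[w_{t,i}]_+ \;=\; 0,
\]
where the last equality follows from the normalization $\p_t = \hatbfp_t/\|\hatbfp_t\|_1$ (and is trivial in the fall-back case $\|\hatbfp_t\|_1 = 0$ since then all awake $[w_{t,i}]_+$ vanish).

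Next, because the coin-betting game for expert $i$ advances only on awake rounds, the wealth lower bound~\eqref{wealth-lowerbound} applied to the subsequence of awake rounds gives, for every $i$,
\[
F_{S_{T+1,i}}\!\Big(\textstyle\sum_{\tau=1}^T \cI_{\tau,i}\tilg_{\tau,i}\Big) \;\le\; 1 + \sum_{\tau=1}^T \cI_{\tau,i} \tilg_{\tau,i} w_{\tau,i}.
\]
Taking the $\bfpi$-weighted average and invoking the per-round neutralization above yields $\sum_i \pi_i F_{S_{T+1,i}}(z_{T+1,i}) \le 1$. Now introduce the comparator $\u$ by the standard change-of-measure trick: since $\log$ is concave,
\[
0 \;\ge\; \log\!\Big(\textstyle\sum_i \pi_i F_{S_{T+1,i}}(z_{T+1,i})\Big) \;\ge\; \sum_i u_i \log F_{S_{T+1,i}}(z_{T+1,i}) - \text{KL}(\u\|\bfpi).
\]
Plugging the hypothesized lower bound $\log F_{S_{T+1,i}}(x) \ge c_1 x^2 / S_{T+1,i} + c_{2,i}$ and rearranging gives $c_1 \sum_i u_i z_{T+1,i}^2/S_{T+1,i} \le \text{KL}(\u\|\bfpi) - \sum_i u_i c_{2,i}$.

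Finally, the truncation in the definition of $\tilg_{t,i}$ was tailored so that $\tilg_{t,i} \ge h_t(\p_t) - \ell_{t,i}$ always (equality when $w_{t,i}>0$, and $[\,\cdot\,]_+$ only increases the value otherwise). Hence $\text{Regret}_T(\u) \le \sum_i u_i z_{T+1,i} \le \sum_i u_i |z_{T+1,i}|$. Applying Cauchy--Schwarz to $\sum_i (u_i \sqrt{S_{T+1,i}})\cdot(u_i |z_{T+1,i}|/\sqrt{S_{T+1,i}})$ (after writing $u_i = \sqrt{u_i}\cdot\sqrt{u_i}$) produces
\[
\text{Regret}_T(\u) \;\le\; \sqrt{\Big(\textstyle\sum_i u_i S_{T+1,i}\Big)\Big(\textstyle\sum_i u_i z_{T+1,i}^2/S_{T+1,i}\Big)},
\]
and combining with the previous display gives the claimed bound (the indexing $S_{T,i}$ vs.\ $S_{T+1,i}$ differs by at most one per expert and is absorbed in the final statement). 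The main obstacle is step (iii): in the non-sleeping proof one can simply factor a common $F_t$ out of $\sum_i \pi_i F_t(\cdot)$, but here each expert's potential $F_{S_{T+1,i}}$ has a distinct index, so the argument must route through the log-concavity/KL bound so that the varying $S_{T+1,i}$ only enters through the quadratic lower bound, and through the Cauchy--Schwarz step that produces the aggregate $\sum_i u_i S_{T+1,i}$.
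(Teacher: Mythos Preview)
Your proposal is correct and follows essentially the same route as the paper's proof: the per-round neutralization $\sum_i \pi_i \cI_{t,i}\tilg_{t,i}w_{t,i}\le 0$, the per-expert wealth bound summed against $\bfpi$, the change-of-measure/Jensen step yielding $\sum_i u_i \log F_{S_{T,i}}(\tilG_{T,i}) \le \KL(\u\|\bfpi)$, and the final Cauchy--Schwarz are all the same ingredients in the same order. The only cosmetic differences are that the paper invokes evenness of $F_t$ explicitly and routes through $h^{-1}_{S_{T,i}}$ before Cauchy--Schwarz (whereas you plug the quadratic lower bound into the KL inequality first and then apply Cauchy--Schwarz), and the $S_{T,i}$ versus $S_{T+1,i}$ indexing, which you already flagged.
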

Note that if $\u=\e_j$, then the regret scales with $S_{T,j}$, which is essentially the number of time steps at which expert $j$ is awake.

Among multiple choices for the potential, we use the Krichevsky-Trofimov (KT) potential~\cite{orabona16from} that satisfies~\eqref{wealth-lowerbound} (see~\cite{orabona16from} for the proof):
\begin{align}
  F_t(x) = \fr{2^t\cdot\Gamma(\dt+1)\cdot\Gamma(\fr{t+\dt+1}{2}+\fr{x}{2})\Gamma(\fr{t+\dt+1}{2}-\fr{x}{2})}{\Gamma(\fr{\dt+1}{2})^2\cdot\Gamma(t+\dt+1)}, 
\end{align}
where $\dt \ge 0$ is a time shift parameter that we set to 0 in this work.
One can show that the betting fraction $\beta_t$ defined in~\eqref{betting-fraction} for KT potential exhibits a simpler form: $\beta_t = \fr{\sum_{\tau=1}^{t-1} \tilg_{\tau}}{t+\dt}$~\cite{orabona16from} and, for Sleeping CB, $\beta_t = \fr{\sum_{\tau=1}^{t-1}\cI_{\tau,i}\tilg_{\tau,i}}{S_{t,i} + \dt} $.
We present the regret of Algorithm~\ref{alg:cblea-sleeping} with the KT potential in Corollary~\ref{cor:cblea-sleeping}. 
\begin{cor} \label{cor:cblea-sleeping}
  Let $\dt=0$.
  The regret of Algorithm~\ref{alg:cblea-sleeping} with the KT potential is
  \begin{align*}
  &\text{\emph{Regret}}_T(\u)  \\
  &\le \sqrt{2\lt(\sum_{i=1}^N u_i S_{T,i} \rt)\cdot \lt( \text{\emph{KL}}(\u || \bfpi) + \fr{1}{2} \ln(T) + 2 \rt) } \;.
  \end{align*}
\end{cor}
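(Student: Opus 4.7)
The plan is to derive the corollary as a direct instantiation of Theorem~\ref{thm:cblea-sleeping}, the only real work being to exhibit explicit constants $c_1$ and $c_{2,i}$ for the KT potential that satisfy the hypothesis $\log F_{S_{T,i}}(x) \ge c_1 x^2 / S_{T,i} + c_{2,i}$. The general theorem already does the bookkeeping over experts and the sleeping indicators; what remains is just an inequality about one function of one variable.

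First I would recall from Orabona and P\'al the standard quadratic lower bound on the log of the KT potential. Using Stirling-type estimates for the ratio of Gamma functions appearing in the closed form of $F_t(x)$ with $\delta = 0$, one obtains an inequality of the form
\begin{align*}
  \log F_t(x) \;\ge\; \tfrac{1}{2}\cdot \tfrac{x^2}{t} \;-\; \tfrac{1}{2}\log t \;-\; \gamma,
\end{align*}
valid for every $t \ge 1$ and every $x$ with $|x| \le t$, where $\gamma$ is an absolute constant that can be tracked through the derivation and verified to satisfy $\gamma \le 2$. This identifies $c_1 = 1/2$ and, for each expert $i$, $c_{2,i} = -\tfrac{1}{2}\log S_{T,i} - \gamma$. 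Note that $c_1$ is uniform in $i$, while $c_{2,i}$ is allowed to depend on $i$ exactly because Theorem~\ref{thm:cblea-sleeping} permits this.

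Next I would substitute these constants into the bound in Theorem~\ref{thm:cblea-sleeping}. The factor $c_1^{-1}$ becomes $2$. For the second factor, using $S_{T,i} \le T$ and $\sum_i u_i = 1$ I would estimate
\begin{align*}
  -\sum_{i=1}^N u_i c_{2,i}
  \;=\; \sum_{i=1}^N u_i \Bigl(\tfrac{1}{2}\log S_{T,i} + \gamma\Bigr)
  \;\le\; \tfrac{1}{2}\ln T + \gamma \;\le\; \tfrac{1}{2}\ln T + 2,
\end{align*}
and then the stated inequality drops out immediately after pulling this into the square root.

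The only step that requires any care is the lower bound on $\log F_t(x)$: one must verify that the simpler KT betting fraction $\beta_t = (\sum_{\tau<t}\tilde g_\tau)/(t+\delta)$ quoted in the paper indeed corresponds to a potential whose $\log$ is bounded below by a quadratic with the claimed leading coefficient $1/2$ and an additive constant no larger than $2$; everything else is just algebra on top of Theorem~\ref{thm:cblea-sleeping}. Since this quadratic bound on the KT potential is a standard ingredient from the coin-betting literature, I would cite~\cite{orabona16from} for it rather than reprove it, and treat the corollary as essentially a corollary of the theorem.
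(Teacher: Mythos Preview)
Your proposal is correct and follows essentially the same route as the paper: invoke the quadratic lower bound on $\log F_t$ from~\cite{orabona16from} to identify $c_1=1/2$ and $c_{2,i}=-\tfrac12\ln S_{T,i}-\gamma$, bound $-\sum_i u_i c_{2,i}\le \tfrac12\ln T + 2$ using $S_{T,i}\le T$, and plug into Theorem~\ref{thm:cblea-sleeping}. The paper makes the constant explicit as $\gamma=\ln(e\sqrt{\pi})\le 2$ (via Lemma~15 of~\cite{orabona16from}), but otherwise the argument is identical.
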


\vspace{-5pt}
\section{Coping with a Changing Environment by Sleeping CB}
\label{sec:cbce}
\vspace{-5pt}
In this section, we synthesize the results in Sections~\ref{sec:meta} and~\ref{sec:coinbetting} to specify and analyze our algorithm.
Recall that a meta algorithm must efficiently aggregate decisions from multiple black-box runs that are active at time $t$.
We treat each black-box run as an expert.
Since we run a black-box instance for each interval $J\in\cJ$, there are a countably infinite number of experts.
Thus, one can use Sleeping CB (Algorithm~\ref{alg:cblea-sleeping}) as the meta algorithm, with  geometric covering intervals. 
The complete algorithm, which we call   \textbf{Coin Betting for Changing Environment (CBCE)}, is shown in Algorithm~\ref{alg:cbce}.

\begin{algorithm}[t]
{\small
\begin{algorithmic}
  \STATE \textbf{Input}: A black-box algorithm $\cB$ and a prior distribution $\bfpi \in \Delta^{|\cJ|}$ over $\{\cB_J \mid J \in \cJ\}$. 
  \FOR {$t = 1$ \TO $T$}
  \STATE For each $J\in\Active(t)$, set \\
  \quad $w_{t,\cB_J} \larrow \beta_{t,\cB_J}(z_{t,\cB_J})\cdot(1 + \sum_{\tau=1}^{t-1} \cI_{\tau,\cB_J} \tilg_{\tau,\cB_J} w_{\tau,\cB_J}) $
  \STATE Set $\hatp_{t,\cB_J} \larrow \pi_{\cB_J} \cI_{t,\cB_J}[w_{t,\cB_J}]_+$ for $J\in \text{Active}(t)$ and 0 for $J\not\in\Active(t)$.
  \STATE Compute $\p_t \larrow \begin{cases} 
    \hatbfp_t / ||\hatbfp_t||_1 & \text{ if } ||\hatbfp_t||_1 > 0 \\
    [\pi_{\cB_J}]_{J \in \Active(t)}  & \text{ if } ||\hatbfp_t||_1 = 0. 
  \end{cases}  $
  \STATE The black-box run $\cB_J$ picks a decision $\x^{\cB_J}_t \in \cD$, $\forall J \in \text{Active}(t)$.
  \STATE The learner picks a decision $\x_t = \sum_{J \in \cJ} p_{t,\cB_J} \x^{\cB_J}_t $.
  \STATE Each black-box run $\cB_J$ that is awake ($J\in\text{Active}(t)$) suffers loss $\ell_{t,\cB_J} := f_t(\x^{\cB_J}_t)$.
  \STATE The learner suffers loss $f_t(\x_t)$. 
  \STATE For each $J \in \text{Active}(t)$, set \\
  \qquad $\tilg_{t,\cB_J} \larrow \begin{cases}
    f_t(\x_t) - \ell_{t,\cB_J}  & \text{ if } w_{t,\cB_J} > 0 \\ 
    [f_t(\x_t) - \ell_{t,\cB_J}]_+ & \text{ if } w_{t,\cB_J} \le 0.
  \end{cases}$
  \ENDFOR
\end{algorithmic}
\caption{Coin Betting for Changing Environment (CBCE)}
\label{alg:cbce}
}
\end{algorithm}
We make use of the following assumption.
\begin{assump}
  \label{ass:convexity}
  The loss function $f_t$ is convex and maps to $[0,1]$,  $\forall t\in\dsN$.
\end{assump}
Nonconvex loss functions can be accommodated by randomized decisions: We  choose the decision $\x_t^{\cB_J}$ from black-box $\cB_J$ with probability $p_{t,\cB_J}$.
It is not difficult to  show that the same regret bound holds, but now in expectation.
When loss functions are unbounded, they can be scaled and restricted to $[0,1]$.
Although this leads to possible nonconvexity, we can still obtain an expected regret bound from the randomized decision  process just described.

We define our choice of prior $\bar\bfpi \in \Delta^{|\cJ|}$ as follows:
\begin{align}\label{def:barpi}
  \bar \pi_{\cB_J} := Z^{-1} \lt( \fr{1}{J_1^2 (1 + \lfl\log_2 J_1\rfl)} \rt) ,\; \forall J \in \cJ \;,
\end{align}
where $Z$ is a normalization factor.
Note that $Z < \pi^2/6$ since there exist at most $1 + \lfl \log_2 J_1 \rfl$ distinct intervals starting at time $J_1$, so $Z$ is less than $\sum_{t=1}^\infty t^{-2} = \pi^2/6$.

We bound the  meta regret w.r.t. a black-box run $\cB_J$  as follows.
\begin{lem} \label{lem:cbce}
  \emph{(Meta regret of CBCE)}
  Assume~\ref{ass:convexity}.
Suppose we run CBCE (Algorithm~\ref{alg:cbce}) with a black-box algorithm $\cB$, prior $\bar\bfpi$, and $\dt = 0$.
  The meta regret of CBCE$\la\cB\ra$ on interval $J=[J_1..J_2]\in\cJ$ is
  \begin{align*}
  &\sum_{t\in J} f_t(\x^{\emph{\CBCE}\la\cB\ra}_t) - f_t(\x^{\cB_J}_t)  \\
  &\le \sqrt{|J| \lt( 7\ln(J_2) + 5\rt) }
   = O(\sqrt{|J|\log J_2}) \;.
  \end{align*}
\end{lem}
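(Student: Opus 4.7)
The plan is to apply Corollary~\ref{cor:cblea-sleeping} to the Sleeping CB instance embedded in CBCE, with the comparator $\u = \e_{\cB_J}$ (the indicator for the single expert corresponding to the run $\cB_J$), prior $\bar\bfpi$ from~\eqref{def:barpi}, and effective horizon $T = J_2$. Because $\cB_J$ is awake precisely on $J = [J_1..J_2]$, the indicator $\cI_{t,\cB_J}$ equals $\one\{t\in J\}$, and therefore
\[
\Regret_{J_2}(\e_{\cB_J}) \;=\; \sum_{t\in J}\bigl(\la\bfell_t,\p_t\ra - \ell_{t,\cB_J}\bigr).
\]
To translate this sleeping-experts regret into the meta regret appearing in the statement, I would invoke Assumption~\ref{ass:convexity}: since $\x_t^{\CBCE\la\cB\ra} = \sum_{J'\in\Active(t)} p_{t,\cB_{J'}}\,\x_t^{\cB_{J'}}$ and $f_t$ is convex, Jensen's inequality yields $f_t(\x_t^{\CBCE\la\cB\ra}) \le \sum_{J'} p_{t,\cB_{J'}} f_t(\x_t^{\cB_{J'}}) = \la\bfell_t,\p_t\ra$; combining with $\ell_{t,\cB_J} = f_t(\x_t^{\cB_J})$ shows that the meta regret on $J$ is dominated by $\Regret_{J_2}(\e_{\cB_J})$.

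After this reduction, the rest is bookkeeping inside the bound of Corollary~\ref{cor:cblea-sleeping}. Since $\cB_J$ is active only on $J$, I get $S_{J_2,\cB_J} = 1 + (J_2 - J_1) = |J|$, so $\sum_i u_i S_{J_2,i} = |J|$; this is exactly the payoff of working in the sleeping-experts setting, since the analogous quantity in standard LEA would be $J_2$, not $|J|$. For the divergence,
\[
\KL(\e_{\cB_J}\,\|\,\bar\bfpi) \;=\; -\ln\bar\pi_{\cB_J} \;=\; \ln Z + 2\ln J_1 + \ln(1+\lfloor\log_2 J_1\rfloor),
\]
and I would bound each summand by: $\ln Z \le \ln(\pi^2/6) \le \tfrac12$; $2\ln J_1 \le 2\ln J_2$; and $\ln(1+\lfloor\log_2 J_1\rfloor) \le \ln(1+\log_2 J_1) \le \ln J_1 \le \ln J_2$, where the penultimate step uses the elementary inequality $1+\log_2 x \le x$ for $x\ge 1$.

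Plugging these in,
\[
\KL(\e_{\cB_J}\,\|\,\bar\bfpi) + \tfrac12\ln J_2 + 2 \;\le\; 3\ln J_2 + \tfrac12 + \tfrac12\ln J_2 + 2 \;=\; \tfrac72\ln J_2 + \tfrac52,
\]
so Corollary~\ref{cor:cblea-sleeping} delivers $\Regret_{J_2}(\e_{\cB_J}) \le \sqrt{2|J|(\tfrac72\ln J_2 + \tfrac52)} = \sqrt{|J|(7\ln J_2 + 5)}$, as desired. The main obstacle is not the mechanics of the reduction (which is essentially a direct application of Corollary~\ref{cor:cblea-sleeping}), but rather making the KL bound tight enough to fit inside the constants $7$ and $5$; this is precisely why the prior~\eqref{def:barpi} carries the extra factor $1/(1+\lfloor\log_2 J_1\rfloor)$, which simultaneously ensures $Z \le \pi^2/6$ and keeps the log-log contribution in $-\ln\bar\pi_{\cB_J}$ bounded by $\ln J_2$.
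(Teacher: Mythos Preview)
Your overall approach and the constants you obtain match the paper's, but there is one genuine gap. You treat CBCE as if it were literally running Sleeping CB (Algorithm~\ref{alg:cblea-sleeping}) with expert losses $\ell_{t,\cB_J}=f_t(\x_t^{\cB_J})$, and then invoke Corollary~\ref{cor:cblea-sleeping} as a black box to bound $\Regret_{J_2}(\e_{\cB_J})$. But compare the $\tilg$ updates: Algorithm~\ref{alg:cbce} sets $\tilg_{t,\cB_J}$ using $f_t(\x_t)-\ell_{t,\cB_J}$ (and its positive part), whereas Algorithm~\ref{alg:cblea-sleeping} uses $h_t(\p_t)-\ell_{t,i}=\la\bfell_t,\p_t\ra-\ell_{t,i}$. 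By Jensen these differ whenever $f_t$ is not affine, so the weight trajectories $\p_t$ of the two algorithms are \emph{not} the same, and Corollary~\ref{cor:cblea-sleeping} as stated does not bound $\sum_{t\in J}\bigl(\la\bfell_t,\p_t\ra-\ell_{t,\cB_J}\bigr)$ for the $\p_t$ that CBCE actually produces.

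The paper closes this gap by applying Jensen one level deeper: instead of first reducing the meta regret to a sleeping-experts regret and then citing Corollary~\ref{cor:cblea-sleeping}, it re-enters the proof of Theorem~\ref{thm:cblea-sleeping} and verifies that the crucial step $\sum_{J}\pi_{\cB_J}\cI_{t,\cB_J}\tilg_{t,\cB_J}w_{t,\cB_J}\le 0$ continues to hold when $\tilg$ is built from $f_t(\x_t)$ rather than $\la\bfell_t,\p_t\ra$. Jensen is precisely what makes this work, applied term-by-term with the direction of the inequality depending on the sign of $w_{t,\cB_J}$. Once that is checked, the remainder of Theorem~\ref{thm:cblea-sleeping} and Corollary~\ref{cor:cblea-sleeping} carry over verbatim, and your KL bookkeeping---including $\KL(\e_{\cB_J}\|\bar\bfpi)\le 3\ln J_2+\tfrac12$---is exactly the paper's. (One small aside: the inequality $1+\log_2 x\le x$ you cite fails for real $x\in(1,2)$; it does hold at every positive integer, which is all you need since $J_1\in\dsN$, but the justification should say so.)
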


We now present the bound on the SA-Regret $R^{\text{\CBCE}\la\cB\ra}_I (\w)$ w.r.t. $\w \in\cW$ on intervals $I\subseteq [T]$ that are not necessarily in $\cJ$.
\vspace{.5em}
\begin{thm}\label{thm:untitled}
  \emph{(SA-Regret of \texorpdfstring{$\CBCE\la\cB\ra$}{})}
  Assume~\ref{ass:convexity} and that the black-box algorithm $\cB$ has regret $R^{\cB}_T$ bounded by $c T^\alpha$, where $\alpha \in (0,1)$.
  Let $I = [I_1.. I_2]$.
  The SA-Regret of $\CBCE$ with black-box $\cB$ on the interval $I$ w.r.t. any $\w \in \cW$ is bounded as follows:
\begin{align*}
  R^{\emph{\CBCE}\la\cB\ra}_I (\w) 
  &\le \fr{4}{2^\alpha - 1} c|I|^\alpha + 8\sqrt{|I| (7\ln ( I_2) + 5)}\\
  &= O ( c |I|^\alpha + \sqrt{|I| \ln I_2} ) \;.
\end{align*}
\end{thm}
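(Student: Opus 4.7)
The proof will be a direct application of the regret decomposition in equation~\eqref{regret-decomposition} with Lemma~\ref{lem:geo-intv}, combined with the meta regret bound from Lemma~\ref{lem:cbce} and the assumed black-box bound. The overall plan is to partition the arbitrary interval $I$ into a doubling/halving sequence of geometric-covering intervals, bound the meta and black-box contributions on each piece, and then sum two geometric series.

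First, apply Lemma~\ref{lem:geo-intv} to write $I = \bigcup_{i=-a}^{b} J^{(i)}$ with $J^{(i)} \in \cJ$ and the length ratios given there. Then invoke the decomposition~\eqref{regret-decomposition}:
\begin{align*}
R^{\CBCE\la\cB\ra}_I(\w)
&= \sum_{i=-a}^{b}\underbrace{\sum_{t\in J^{(i)}}\lt(f_t(\x^{\CBCE\la\cB\ra}_t) - f_t(\x^{\cB_{J^{(i)}}}_t)\rt)}_{\text{meta on } J^{(i)}}\\
&\quad + \sum_{i=-a}^{b}\underbrace{\sum_{t\in J^{(i)}}\lt(f_t(\x^{\cB_{J^{(i)}}}_t)-f_t(\w)\rt)}_{\text{black-box on } J^{(i)}}.
\end{align*}
By Lemma~\ref{lem:cbce}, the meta term on $J^{(i)} = [J^{(i)}_1..J^{(i)}_2]$ is at most $\sqrt{|J^{(i)}|(7\ln(J^{(i)}_2)+5)} \le \sqrt{|J^{(i)}|(7\ln I_2 + 5)}$, since $J^{(i)} \subseteq I$. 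Each black-box run $\cB_{J^{(i)}}$ was freshly started at the beginning of $J^{(i)}$, so its regret on $J^{(i)}$ is at most $c|J^{(i)}|^\alpha$ by assumption.

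Next, sum each contribution using the doubling/halving structure. Since $|J^{(-i)}| \le |J^{(0)}|/2^i$ for $i=0,\ldots,a$ and $|J^{(i)}| \le |J^{(1)}|/2^{i-1}$ for $i=1,\ldots,b$, and both $|J^{(0)}|,|J^{(1)}| \le |I|$, each side forms a geometric series. For any exponent $\gamma \in (0,1]$,
\begin{align*}
\sum_{i=-a}^{b} |J^{(i)}|^\gamma
\le |I|^\gamma\lt(\sum_{k\ge 0} 2^{-k\gamma} + \sum_{k\ge 0} 2^{-k\gamma}\rt) = \fr{2 \cdot 2^\gamma}{2^\gamma-1}|I|^\gamma.
\end{align*}
Applying this with $\gamma=\alpha$ bounds the total black-box contribution by $\fr{2\cdot 2^\alpha}{2^\alpha-1}c|I|^\alpha \le \fr{4}{2^\alpha-1}c|I|^\alpha$. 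Applying it with $\gamma = 1/2$ and pulling the factor $\sqrt{7\ln I_2 + 5}$ outside bounds the meta contribution by $\fr{2\sqrt 2}{\sqrt 2 - 1}\sqrt{|I|(7\ln I_2 + 5)} \le 8\sqrt{|I|(7\ln I_2 + 5)}$.

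Adding the two pieces yields exactly the claimed bound, and the big-$O$ form follows since $\alpha$ is a constant. The work is essentially a bookkeeping exercise once Lemmas~\ref{lem:geo-intv} and~\ref{lem:cbce} are in hand; the only place to be careful is replacing $\ln J^{(i)}_2$ by $\ln I_2$ uniformly so the logarithmic factor can be pulled out of the sum, and choosing the geometric-series bound loose enough to absorb the $\sqrt 2/(\sqrt 2-1)$ constant into the clean coefficient $8$. There is no conceptual obstacle beyond this.
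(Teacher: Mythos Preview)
Your proposal is correct and follows essentially the same approach as the paper's proof: decompose $I$ via Lemma~\ref{lem:geo-intv}, apply~\eqref{regret-decomposition}, bound each meta piece by Lemma~\ref{lem:cbce} and each black-box piece by $c|J^{(i)}|^\alpha$, then sum the resulting geometric series over the two doubling/halving sequences. The constants you obtain, $\tfrac{2\cdot 2^\alpha}{2^\alpha-1}\le \tfrac{4}{2^\alpha-1}$ and $\tfrac{2\sqrt 2}{\sqrt 2-1}=4+2\sqrt 2\le 8$, match the paper's exactly.
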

For the standard LEA problem, one can run the algorithm CB with KT potential (equivalent to Sleeping CB with $\cI_{t,i}=1, \forall t,i$), which achieves static regret $O(\sqrt{ T \log (NT) })$~\cite{orabona16from}.
Using CB as the black-box algorithm, the regret of $\CBCE\la\cB\ra$ on $I$ is $R^{\CBCE\la\CB\ra}_I(\w) = O(\sqrt{ |I| \log (N I_2) })$, and so SA-Regret$^{\CBCE\la\CB\ra}_T(|I|) =O(\sqrt{ |I| \log (N T) }) $.
It follows that the $m$-shift regret of $\CBCE\la\CB\ra$ is $O(\sqrt{mT\log(NT)})$ using the technique presented our supplementary material. 

As said above, our bound improves over the best known result with the same time complexity in~\cite{daniely15strongly}.
The key ingredient that allows us to get a better bound is the Sleeping CB Algorithm~\ref{alg:cblea-sleeping}, that achieves a better SA-Regret than the one of \cite{daniely15strongly}. In the next section, we will show that the empirical results also confirm the theoretical gap of these two algorithms.

\paragraph{Discussion.}
Note that one can obtain the same result using the data streaming intervals (DS)~\cite{hazan07adaptive,gyorgy12efficient} in place of the geometric covering intervals (GC).
Section~\ref{sec:ds} of our supplementary material elaborates on this with a Lemma stating that DS induces a partition of an interval $I$ in a very similar way to GC (a sequence of intervals of doubling lengths).

Our improved bound has another interesting implication.
In designing strongly adaptive algorithms for LEA, there is a well known technique called ``restarts'' or ``sleeping experts'' that has time complexity $O(NT^2)$~\cite{hazan07adaptive,luo15achieving}, and several studies used DS or GC to reduce the time complexity to $O(NT\log T)$~\cite{hazan07adaptive,gyorgy12efficient,daniely15strongly}.
However, it was unclear whether it is possible to achieve both an $m$-shift regret of $O(\sqrt{mT(\log N + \log T)})$ and a time complexity of $O(NT\log T)$ without knowing $m$.
Indeed, every study on $m$-shift regret with time $O(NT\log T)$ results in suboptimal $m$-shift regret bounds~\cite{daniely15strongly,gyorgy12efficient,hazan07adaptive}, to our knowledge.
Furthermore, some studies (e.g.,~\cite[Section~5]{luo15achieving}) speculated that perhaps applying the data streaming technique would increase its SA-Regret by a logarithmic factor.
Our analysis implies that one can reduce the overall time complexity to $O(NT\log T)$ without sacrificing the order of SA-Regret and $m$-shift regret. %

%

\vspace{-7pt}
\section{Experiments}
\label{sec:expr}
\vspace{-7pt}

We now turn to an empirical evaluation of algorithms for changing environments.
We compare the performance of the meta algorithms under two online learning problems: $(i)$ learning with expert advice (LEA) and $(ii)$ metric learning (ML).
We compare CBCE with SAOL~\cite{daniely15strongly} and AdaNormalHedge.TV (ATV)~\cite{luo15achieving}.
Although ATV was originally designed for LEA only, it is not hard to extend it to a meta algorithm and show that it has the same order of SA-Regret as CBCE using the same techniques.

For our empirical study, we replace the geometric covering intervals (GC) with the data streaming intervals (DS)~\cite{hazan07adaptive,gyorgy12efficient}. 
Let $u(t)$ be a number such that $2^{u(t)}$ is the largest power of 2 that divides $t$; e.g., $u(12)=2$.
The data streaming intervals are $\cJ = \{[t..(t+g\cdot2^{u(t)}-1)]: t =1,2,\ldots \}$ for some $g \ge 1$.
DS is an attractive alternative, unlike GC, $(i)$ DS initiates one and only one black-box run at each time, and $(ii)$ it is more flexible in that the parameter $g$ can be increased to enjoy smaller regret in practice while increasing the time complexity by a constant factor.

For both ATV and CBCE, we set the prior $\bfpi$ over the black-box runs as the uniform distribution. 
Note that this does not break the theoretical guarantees since the number of black-box runs are never actually infinite; we used $\bar\bfpi$~\eqref{def:barpi} in Section~\ref{sec:cbce} for ease of exposition.

\begin{figure*}
  {\centering
  \begin{tabular}{c}
    \includegraphics[width=.9\textwidth,valign=t]{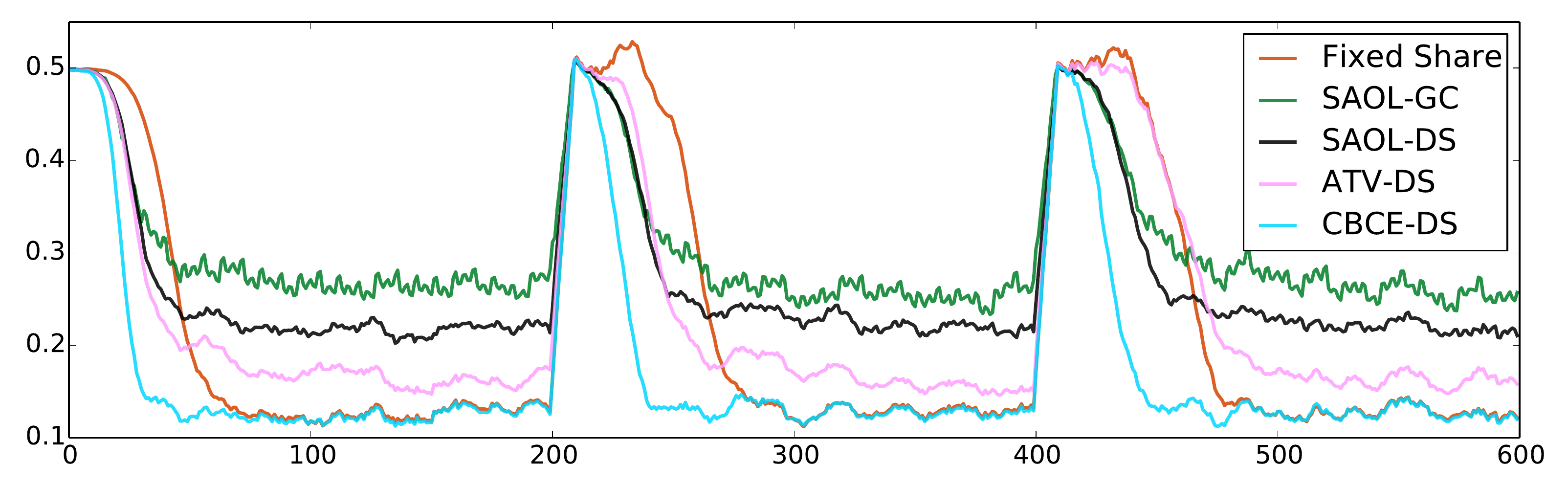} \\ (a) Learning with expert advice \vspace{-2pt}\\ 
    \includegraphics[width=.9\textwidth,valign=t]{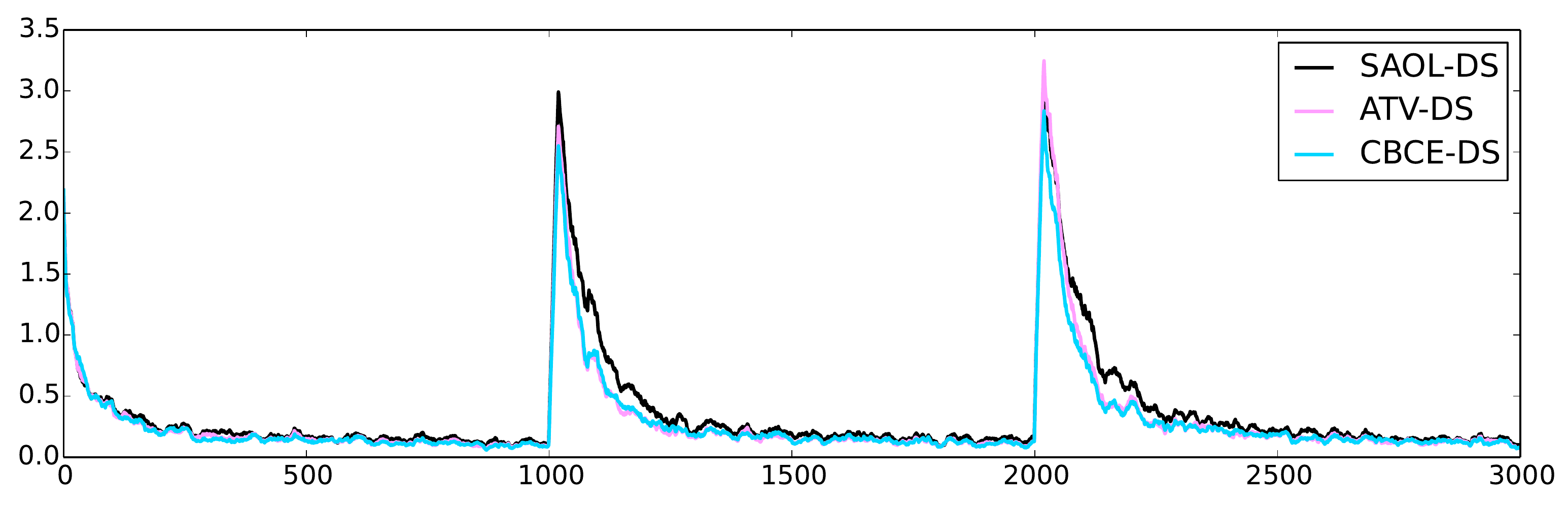} \\ (b) Metric learning
  \end{tabular}
  \vspace{-9pt}
  \caption{Experiment results: Our method CBCE outperforms several baseline methods. }
  \label{fig:expr}
}
\vspace{-15pt}
\end{figure*}

\vspace{-5pt}
\subsection{Learning with Expert Advice (LEA)}
\vspace{-5pt}

We consider LEA with linear loss.
That is, the loss function at time $t$ is $h_t(\p) = \l_t^\T \p$.
We draw linear loss $\l_t \in [0,1]^{N}, \forall t=1,\ldots,600$ for $N=1000$ experts from Uniform$(0,1)$ distribution.
Then, for time $t\in[1,200]$, we reduce loss of expert 1 by subtracting 1/2 from its loss: $\ell_{t,1} \larrow [\ell_{t,1} - 1/2]_+$.
For time $t\in[201,400]$ and $t\in[401,600]$, we perform the same for expert 2 and 3, respectively.
Thus, the best expert is 1, 2, and 3 for time segment [1,200], [201,400], and [401,600], respectively.
We use the data streaming intervals with $g=2$.
In all our experiments, DS with $g=2$ outperforms GC while spending roughly the same time.

For each meta algorithm, we use the CB with KT potential~\cite{orabona16from} as the black-box algorithm.
We warm-start each black-box run at time $t\ge2$ by setting its prior to the decision $\p_{t-1}$ chosen by the meta algorithm at time step $t-1$.
We repeat the experiment 50 times and plot their average loss by computing moving mean with window size 10 in Figure~\ref{fig:expr}(a).
Overall, we observe that CBCE (i) catches up with the environmental shift faster than any other algorithms and (ii) has the lowest loss when the shift has settled down.
ATV is the second best, outperforming SAOL.
Note that SAOL with GC (SAOL-GC) tends to incur larger loss than the SAOL with DS.
We observe that this is true for every meta algorithm, so we omit the result here to avoid clutter.
We also run Fixed Share using the parameters recommended by Corollary 5.1 of~\cite{cesa-bianchi06prediction}, which requires to know the target time horizon $T=600$ and the true number of switches $m=2$.
Such a strong assumption is often unrealistic in practice.
We observe that Fixed Share is the slowest in adapting to the environmental changes. 
Nevertheless, Fixed Share remains attractive since (i) after the switch has settled down its loss is competitive to CBCE, and (ii) its time complexity is lower than other algorithms ($O(NT)$ rather than $O(NT\log T)$).

\vspace{-5pt}
\subsection{Metric Learning}
\vspace{-5pt}

We consider the problem of learning squared Mahalanobis distance from pairwise comparisons using the mirror descent algorithm~\cite{kunapuli12mirror}.
The data point at time $t$ is $(\z^{(1)}_t, \z^{(2)}_t, y_t)$, where 
$y_t \in\{1,-1\}$ indicates whether or not $\z^{(1)}_t \in \dsR^d$ and $\z^{(2)}_t \in \dsR^d$ belongs to the same class.
The goal is to learn a squared Mahalanobis distance parameterized by a positive semi-definite matrix $\M$ and a bias $\mu$ that have small loss $f_t([\M; \mu]) :=$
\vspace{-5pt}
\begin{align*}
  [1 - y_t(\mu - (\z^{(1)}_t - \z^{(2)}_t)^\T \M (\z^{(1)}_t - \z^{(2)}_t))]_+ + \rho || \M ||_* \;,
\end{align*}
where $\mu$ is the bias parameter and $|| \cdot ||_*$ is the trace norm.
Such a formulation encourages predicting $y_t$ with large margin and low rank in $\M$.
A learned matrix $\M$ that has low rank can be useful in a number of machine learning tasks; e.g., distance-based classifications, clusterings, and low-dimensional embeddings.
We refer to~\cite{kunapuli12mirror} for details.

We create a scenario that exhibits shifts in the metric, which is inspired by~\cite{greenewald16nonstationary}. 
Specifically, we create a mixture of three Gaussians in $\dsR^3$ whose means are well-separated, and mixture weights are .5, .3, and .2.
We draw 2000 points from it while keeping a record of their memberships. 
We repeat this three times independently and concatenate these three vectors to have 2000 9-dimensional vectors. 
Finally, we append to each point a 16-dimensional vector filled with Gaussian noise to have 25-dimensional vectors. 
Such a construction implies that for each point there are three independent cluster memberships.
We run each algorithm for 1500 time steps.
For time 1 to 500, we randomly pick a pair of points from the data pool and assign $y_t=1$ $(y_t=-1)$ if the pair belongs to the same (different) cluster under the first clustering.
For time 501 to 1000 (1001 to 1500), we perform the same but under the second (third) clustering.
In this way, a learner faces tracking the change in metric, especially the important low-dimensional subspaces for each time segment.

Since the loss of the metric learning is unbounded, we scale the loss by multiplying 1/5 and then capping it above at 1 as in~\cite{greenewald16nonstationary}.
Although the randomized decision discussed in Section~\ref{sec:cbce} can be used to maintain the theoretical guarantee, we stick to the weighted average since the event that the loss being capped at 1 is rare in our experiments.
As in our LEA experiment, we use the data streaming intervals with $g=2$ and initialize each black-box algorithm with the decision of the meta algorithm at the previous time step.
We repeat the experiment 50 times and plot their average loss in Figure~\ref{fig:expr}(b) by moving mean with window size 20.
We observe that CBCE and ATV both outperforms SAOL.
This confirms the improved regret bound of CBCE and ATV.

\vspace{-10pt}
\section{Future Work}
\label{sec:future}
\vspace{-9pt}

Among a number of interesting directions, we are interested in reducing the time complexity in the online learning within a changing environment.
For LEA, Fixed Share has the best time complexity.
However, Fixed Share is inherently not parameter-free; especially, it requires the knowledge of the number of shifts $m$. 
Achieving the best $m$-shift regret bound without knowing $m$ or the best SA-Regret bound in time $O(NT)$ would be an interesting future work.
The same direction is interesting for the online convex optimization (OCO) problem.
It would be interesting if an OCO algorithm such as online gradient descent can have the same SA-Regret as CBCE$\la$OGD$\ra$ without paying extra order of computation.

\vspace{-10pt}
\subsubsection*{Acknowledgements}
\vspace{-5pt}
This work was supported by NSF Award IIS-1447449 and NIH Award 1 U54 AI117924-01.
The authors thank Andr\'as Gy\"orgy for providing constructive feedback and Kristjan Greenewald for providing the metric learning code.

\newpage
\bibliographystyle{IEEEannot}
\bibliography{library-shared}

\clearpage

\onecolumn
\begin{center}
{\Large\bf Supplementary Material}
\end{center}
\pdfoutput=1 
\renewcommand\thesection{\Alph{section}}
\setcounter{section}{0}

\section{Strongly Adaptive Regret to \texorpdfstring{$m$}{}-Shift Regret}
\label{sec:regret-conversion}
We present an example where a strongly adaptive regret bound can be turned into an $m$-shift regret bound. 

Let $c>0$.
We claim that: 
$$
\lt( \forall I=[I_1..I_2],\; R^\cA_I(\w) \le c\sqrt{ |I| \log (I_2)} \rt) \implies m\mbox{-Shift-Regret}_T^\cA \le c\sqrt{(m+1)T\log(T)} \;.
$$
To prove the claim, note that an $m$-shift sequence of experts $\w_{1:T}$ can be partitioned into $m+1$ contiguous blocks denoted by $I^{(1)}, \ldots, I^{(m+1)}$; e.g., $(1,1,2,2,1)$ is 2-switch sequence whose partition $\{ [1,2], [3,4], [5] \}$.
Denote by $\w_{I^{(k)}} \in \cW$ the comparator in interval $I^{(k)}$: $ \w_t = \w_{I^{(k)}}, \forall t\in I^{(k)}$. 
Then, using Cauchy-Schwartz inequality, 
\begin{align} \label{regret-conversion}
&  m\text{-Shift-Regret}_T^\cA  \notag\\
&= \max_{\w_{1:T}: m\text{-shift seq.}} \sum_{k=1}^{m+1} R^\cA_{I^{(k)}} ( \w_{I^{(k)}} )  \notag \\
&\le \max_{\w_{1:T}: m\text{-shift seq.}} c\sqrt{\log T} \sum_{k=1}^{m+1} \sqrt{|I^{(k)}|} \notag\\
&\le \max_{\w_{1:T}: m\text{-shift seq.}} c\sqrt{\log T} \sqrt{(m+1)} \cdot \sqrt{\sum_{k=1}^{m+1} |I^{(k)}|} \notag \\
&= c\sqrt{\log T} \sqrt{(m+1)} \cdot \sqrt{T} \;.
\end{align}

\section{Proof of Theorem~\ref{thm:cblea-sleeping}}

\begin{proof}
  First, we show that $ \sum_{i=1}^N \pi_i \cI_{t,i} \tilg_{t,i} w_{t,i} \le 0$.
\begin{align*}
\sum_{i=1}^N   \pi_i \cI_{t,i}\tilg_{t,i} w_{t,i} 
&= \sum_{i:\pi_i \cI_{t,i} w_{t,i} > 0} \pi_i [w_{t,i}]_+ (\la \bfell_t,\p_t \ra_{\bfcI_t} - \ell_{t,i}) + \sum_{i: \pi_i\cI_{t,i} w_{t,i}\le 0} \pi_i \cI_{t,i} w_{t,i}[\la \bfell_t,\p_t \ra_{\bfcI_t} - \ell_{t,i}]_+ \\
&= ||\hatbfp_t||_1 \sum_{i:\pi_i\cI_{t,i} w_{t,i} > 0}  p_{t,i}(\la \bfell_t,\p_t \ra_{\bfcI_t} - \ell_{t,i}) + \sum_{i: \pi_i\cI_{t,i} w_{t,i}\le 0}  \pi_i\cI_{t,i} w_{t,i}[\la \bfell_t,\p_t \ra_{\bfcI_t} - \ell_{t,i}]_+ \\
&= 0 + \sum_{i:  \pi_i\cI_{t,i} w_{t,i}\le 0} \pi_i \cI_{t,i} w_{t,i}[\la \bfell_t,\p_t \ra_{\bfcI_t} - \ell_{t,i}]_+  
\le 0 \;.
\end{align*}
Then, due to the property of the coin betting potentials~\eqref{wealth-lowerbound},
\begin{align}\label{constantbound}
  \sum_{i=1}^N \pi_i F_{S_{T,i}}\lt( \sum_{t=1}^T \cI_{t,i} \tilg_{t,i} \rt) \le 1 + \sum_{i=1}^N \pi_i \sum_{t=1}^T \cI_{t,i} \tilg_{t,i} w_{t,i} \le 1 \;.
\end{align}
Define $\tilG_{T,i} := \sum_{t=1}^T \cI_{t,i}\tilg_{t,i}$. 
Since $F_T$ is even,
\begin{align*}
\sum_{i=1}^N u_i \log (F_{S_{T,i}}(|\tilG_{T,i}|))
&=\sum_{i=1}^N u_i \log (F_{S_{T,i}}(\tilG_{T,i})) \\
&\le \sum_{i=1}^N u_i \lt( \log\lt(\fr{u_i}{\pi_i}\rt) + \log\lt(\fr{\pi_i}{u_i} \cdot F_{S_{T,i}}(\tilG_{T,i}) \rt) \rt) \\
&\le \text{KL}(\u||\bfpi) + \sum_{i=1}^N u_i \log\lt(\fr{\pi_i}{u_i} \cdot F_{S_{T,i}}(\tilG_{T,i}) \rt)  \\
&\le \text{KL}(\u||\bfpi) + \log\lt(\sum_{i=1}^N u_i \cdot \fr{\pi_i}{u_i} \cdot F_{S_{T,i}}(\tilG_{T,i}) \rt)   \\
&\stackrel{\eqref{constantbound}}{\le} \text{KL}(\u||\bfpi) \;.
\end{align*}
Then, for any comparator $\u \in \Delta^N$,
\begin{align*}
\text{Regret}_T(\u) &= \sum_{t=1}^T\sum_{i=1}^N \cI_{t,i} u_i(\langle \bfell_t,\p_t\rangle - \ell_{t,i}) \\
&\le \sum_{t=1}^T \sum_{i=1}^N \cI_{t,i} u_i \tilg_{t,i} 
 \le \sum_{i=1}^N u_i |\tilG_{T,i}| \\
&= \sum_{i=1}^N u_i h^{-1}_{S_{T,i}} ( h_{S_{T,i}} (|\tilG_{T,i}|) ) \\
&\le \sum_{i=1}^N u_i h^{-1}_{S_{T,i}} ( \log (F_{S_{T,i}} (|\tilG_{T,i}|)) ) \\
&= \sum_{i=1}^N u_i \sqrt{c_1^{-1}\cdot (S_{T,i} )\cdot(\log (F_{S_{T,i}}(|\tilG_{T,i}|)) - c_{2,i}) } \\
&= \sum_{i=1}^N  \sqrt{c_1^{-1} u_i S_{T,i} }\cdot\sqrt{u_i(\log (F_{S_{T,i}}(|\tilG_{T,i}|)) - c_{2,i}) } \\
&\stackrel{(a)}{\le} \sqrt{ c_1^{-1} \lt( \sum_{i=1}^N u_i S_{T,i} \rt) \cdot \lt( \sum_{i=1}^N u_i(\log (F_{S_{T,i}}(|\tilG_{T,i}|)) - c_{2,i}) \rt) }  \\
&\le \sqrt{ c_1^{-1} \lt( \sum_{i=1}^N u_i S_{T,i} \rt) \cdot \lt( \text{KL}(\u || \bfpi) - \sum_{i=1}^N u_i c_{2,i} \rt) },
\end{align*}
where $(a)$ is due to the Cauchy-Schwarz inequality (verify that the factors under the square root are all nonnegative since $\log F_{S_{T,i}}(x) \ge h_{S_{T,i}}(x))$.

\end{proof}

\section{Proof of Corollary~\ref{cor:cblea-sleeping}}
\begin{proof}
  Define $h_{S_{T,i}}(x) := \fr{x^2}{2S_{T,i}} + \fr{1}{2} \ln(\fr{1}{S_{T,i}}) - \ln(e\sqrt{\pi}) $.
  According to Lemma 15 of~\cite{orabona16from} with $\dt=0$, $h_{S_{T,i}}(x) \le \ln F_{S_{T,i}}(x)$.
  Thus, from the context of Theorem~\ref{thm:cblea-sleeping}, $c_1 = 1/2$ and $c_2=\fr{1}{2}\ln(\fr{1}{S_{T,i}}) - \ln (e\sqrt{\pi})$.
  Then,
  \begin{align*}
  -\sum_{i=1}^N u_i c_{2,i}
  &= \sum_{i=1}^N u_i \lt( (1/2) \ln(S_{T,i}) + \ln(e\sqrt{\pi}) \rt) \\
  &\le \fr{1}{2} \ln(T) + 2 \;.
  \end{align*}
  Plugging in $c_1$ and $c_2$ to Theorem~\ref{thm:cblea-sleeping},
  \begin{align*}
  &\text{Regret}_T(\u)  \\
  &\le \sqrt{2\lt(\sum_{i=1}^N u_i S_{T,i} \rt)\cdot \lt( \text{{KL}}(\u || \bfpi) - \sum_{i=1}^N u_i c_{2,i} \rt)  } \\
  &\le \sqrt{2\lt(\sum_{i=1}^N u_i S_{T,i} \rt)\cdot \lt( \text{{KL}}(\u || \bfpi) + \fr{1}{2} \ln(T) + 2 \rt)  } \;.
  \end{align*}
\end{proof} 

\section{Proof of Lemma~\ref{lem:cbce}}
\label{sec:lemma}
\begin{proof}
  Note that our regret definition for meta algorithms 
  \begin{align}\label{pf-lem-cbce-1}
  \sum_{t\in J} f_t(\x_t^{\CBCE\la\cB\ra}) - f_t(\x_t^{\cB_J})\;,
  \end{align}
  is slightly different from that of Theorem~\ref{thm:cblea-sleeping} for $\u=\e_i$: $\sum_{t\in J:\cI_{t,i} = 1} \la\l_t,\p_t\ra - \ell_{t,i}$.
  This translates to, in the language of meta algorithms, $\sum_{t\in J:\cI_{t,\cB_J} = 1} \la\l_t,\p_t\ra_{\bfcI_t} - \ell_{t,\cB_J}$ for $\u=\e_{\cB_J}$ (recall $\ell_{t,\cB_J} = f_t(\x_t^{\cB_J})$).

  We claim that Theorem~\ref{thm:cblea-sleeping} and Corollary~\ref{cor:cblea-sleeping} for hold true for the regret~\eqref{pf-lem-cbce-1}.
  Note that, using Jensen's inequality, $f_t(\x^{\CBCE\la\cB\ra}_t) \le \la \bfell_t,\p_t \ra_{\bfcI_t}$.
  Then, in the proof of Theorem~\ref{thm:cblea-sleeping}
  \begin{align*}    
  &\sum_{J\in\cJ} \pi_{\cB_J} \cI_{t,\cB_J}\tilg_{t,\cB_J} w_{t,\cB_J}  \\
  &= \sum_{J\in\cJ: \pi_{\cB_J}\cI_{t,\cB_J} w_{t,\cB_J} > 0} \pi_{\cB_J} [w_{t,\cB_J}]_+ (f_t(\x^{\CBCE\la\cB\ra}_t) - \ell_{t,\cB_J}) \;+ \\
  &\qquad   \sum_{J\in\cJ: \pi_{\cB_J}\cI_{t,\cB_J} w_{t,\cB_J}\le 0} \pi_{\cB_J} \cI_{t,\cB_J} w_{t,\cB_J}[\la \bfell_t,\p_t \ra_{\bfcI_t} - \ell_{t,\cB_J}]_+ \\
  &\le \sum_{J\in\cJ: \pi_{\cB_J}\cI_{t,\cB_J} w_{t,\cB_J} > 0} \pi_{\cB_J} [w_{t,\cB_J}]_+ (\la \bfell_t,\p_t \ra_{\bfcI_t} - \ell_{t,\cB_J}) \;+ \\
  &\qquad   \sum_{J\in\cJ: \pi_{\cB_J}\cI_{t,\cB_J} w_{t,\cB_J}\le 0} \pi_{\cB_J} \cI_{t,\cB_J} w_{t,\cB_J}[f_t(\x^{\CBCE\la\cB\ra}_t) - \ell_{t,\cB_J}]_+ \;.
  \end{align*}
  Then, one can see that the proof of Theorem~\ref{thm:cblea-sleeping} goes through, so does Corollary~\ref{cor:cblea-sleeping}.
  
  Since $\KL(\e_{\cB_J} || \bar\bfpi) = \ln\fr{1}{\bar\pi_{\cB_J}} \le \ln\lt(\fr{\pi^2}{6}J_1^{2}(1+\lfl\log_2 J_1\rfl)\rt) \le 3 \ln(J_2) + \fr{1}{2}$, it follows that
  \begin{align*}
  \sum_{t\in J} f_t(\x^{\emph{\CBCE}\la\cB\ra}_t) - f_t(\x^{\cB_J}_t)
  &\stackrel{\text{(Cor.~\ref{cor:cblea-sleeping})}}{\le} \sqrt{2 S_{T,\cB_J} \cdot \lt( \text{{KL}}(\e_{\cB_J} || \bfpi) + \fr{1}{2} \ln(J_2) + 2 \rt) }  \\
  &\le \sqrt{2 |J| \lt( \fr{7}{2} \ln(J_2) + \fr{5}{2} \rt) }  \\
  &= \sqrt{|J| \lt( 7\ln(J_2) + 5\rt) }  \; .
  \end{align*}
\end{proof} 

\section{Proof of Theorem~\ref{thm:untitled}}
\begin{proof}

By Lemma~\ref{lem:geo-intv}, we know that $J$ can be decomposed into two sequences of intervals $\{J^{(-a)}, \ldots, J^{(0)}\}$ and $\{J^{(1)}, J^{(2)}, \ldots, J^{(b)}\}$.
Continuing from~\eqref{regret-decomposition},
\begin{align}
  R^{\CBCE\la\cB\ra}_I(\w)  
&= \underbrace{ \sum_{i=-a}^{b} \sum_{t\in J^{(i)}} \lt(f_t(\x^{\CBCE\la\cB\ra}_t) - f_t(\x^{\cB_{J^{(i)}}}_t) \rt) }_{S_1}  + 
   \underbrace{ \sum_{i=-a}^{b} \sum_{t\in J^{(i)}}  \lt(f_t(\x^{\cB_{J^{(i)}}}_t) - f_t(\w)\rt) }_{S_2} \;. \notag
\end{align}
Then,
\begin{align*}
S_1 = \sum_{i\in [(-a)..0]} \sum_{t \in J^{(i)}} \lt(f_t(\x^{\CBCE\la\cB\ra}_t) - f_t(\x^{\cB_{J^{(i)}}}_t) \rt)  + 
  \sum_{i\in [1..b]}\sum_{t \in J^{(i)}} \lt(f_t(\x^{\CBCE\la\cB\ra}_t) - f_t(\x^{\cB_{J^{(i)}}}_t) \rt) \;.
\end{align*}
The first summation is upper-bounded by, due to Lemma~\ref{lem:cbce} and Lemma~\ref{lem:geo-intv}, $\sum_{i\in [(-a)..0]} \sqrt{|J^{(i)}|(7\ln(I_2 + 5))} \le \sqrt{7\ln(I_2) + 5} \cdot \sum_{i=0}^\infty (2^{-i}|I|)^{1/2} \le \sqrt{7\ln(I_2) + 5} \cdot  (4 \sqrt{|I|}) $.
The second summation is bounded by the same quantity due to symmetry.
Thus, $S_1 \le 8\sqrt{|I| (7\ln( I_2) + 5)}$ .

In the same manner, one can show that $S_2 \le 2 \cdot \fr{2^\alpha}{2^\alpha - 1}|I|^\alpha \le \fr{4}{2^\alpha - 1}|I|^\alpha $, which concludes the proof.
\end{proof}

\section{The Data Streaming Intervals Can Replace the Geometric Covering Intervals}
\label{sec:ds}

We show that the data streaming intervals achieves the same goal as the geometric covering intervals (GC).
The data streaming intervals (DS) are 
\begin{align}\label{datastreaming}
\cJ = \{[t..(t+g\cdot2^{u(t)}-1)]: t =1,2,\ldots \} \;.
\end{align}
For any interval $J$, we denote by $J_1$ its starting time and by $J_2$ its ending time.
We say an interval $J'$ is a prefix of $J$ if $J'_1 = J=1$ and $J' \subseteq J$.

We show that DS also partitions an interval $I$ in Lemma~\ref{lem:datastreaming}.
\begin{lem}\label{lem:datastreaming}
  Consider $\cJ$ defined in~\eqref{datastreaming} with $g\ge1$.
  An interval $[I_1..I_2]\subseteq[T]$ can be partitioned to a sequence of intervals $\bar J^{(1)}, \bar J^{(2)}, \ldots, \bar J^{(n)}$ such that 
  \begin{enumerate}
    \item $\bar J^{(i)}$ is a prefix of some $J \in \cJ$.
    \item $|\bar J^{(i+1)}| / |\bar J^{(i)}| \ge 2$ for $i=1,\ldots,(n-1)$. 
  \end{enumerate}
\end{lem}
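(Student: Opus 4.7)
The plan is to prove this by strong induction on $|I|$. The base case $|I|=1$ is immediate: take $\bar J^{(1)}=[I_1..I_1]$, which is a prefix of the DS interval starting at $I_1$ (of length $g\cdot 2^{u(I_1)}\ge 1$).

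For the inductive step, I would build the partition from right to left. Define $t^*$ as the smallest index in $[I_1..I_2]$ such that the DS interval starting at $t^*$ reaches $I_2$, that is, $t^* + g\cdot 2^{u(t^*)} - 1 \ge I_2$. Existence is guaranteed because $t=I_2$ trivially satisfies this. Set $\bar J^{(n)} := [t^*..I_2]$, which by construction is a prefix of the DS interval at $t^*$, giving property 1. If $t^*=I_1$, we are done with $n=1$. Otherwise, apply the inductive hypothesis to $[I_1..t^*-1]$ to obtain $\bar J^{(1)},\ldots,\bar J^{(n-1)}$, and concatenate.

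The central technical claim is the structural bound $t^* - I_1 < 2^{u(t^*)}$, from which the doubling condition on $|\bar J^{(n)}|/|\bar J^{(n-1)}|$ will follow. I would prove it by contradiction. Let $k := u(t^*)$ and suppose $t^* - 2^k \ge I_1$. Write $t^* = 2^k(2m+1)$ for some nonnegative integer $m$; then $t' := t^* - 2^k = 2^{k+1}m$, so $u(t') \ge k+1$. The DS interval at $t'$ therefore has length at least $g\cdot 2^{k+1} = 2g\cdot 2^k$, and so it reaches
\[
t' + 2g\cdot 2^k - 1 \;=\; t^* + (2g-1)\cdot 2^k - 1 \;\ge\; t^* + g\cdot 2^k - 1 \;\ge\; I_2,
\]
using $g\cdot 2^{u(t^*)} \ge I_2 - t^* + 1$ from the defining property of $t^*$. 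This makes $t'<t^*$ a strictly smaller candidate in $[I_1..I_2]$, contradicting the minimality of $t^*$. Hence $t^* - I_1 \le 2^k - 1$, and since the inductive hypothesis furnishes $|\bar J^{(n-1)}| \le t^* - I_1$, we obtain an upper bound controlled by $2^{u(t^*)}$.

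The main obstacle I expect is turning this upper bound on $t^* - I_1$ into the quantitative doubling inequality $|\bar J^{(n)}| \ge 2|\bar J^{(n-1)}|$: the former sits naturally against $2^{u(t^*)}$, while $|\bar J^{(n)}| = I_2 - t^* + 1$ only has the upper bound $g\cdot 2^{u(t^*)}$ from the defining property of $t^*$. Bridging this requires exploiting the fact that $|\bar J^{(n-1)}|$ itself is, by the inductive doubling structure, strictly less than half of $t^*-I_1$ (unless $n=2$), so the worst case reduces to analyzing the two-piece scenario carefully. The paper's own remark about a ``subtle case'' when replacing GC with DS suggests this corner may need a small ad hoc adjustment, and I anticipate this to be where the proof is most delicate.
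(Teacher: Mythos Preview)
Your construction differs from the paper's: the paper works left to right, greedily taking at each step the \emph{full} data-streaming interval $[p..\,p+2^{u(p)}-1]$ at the current position $p$, observing that the next start $p+2^{u(p)}$ has strictly larger $2$-adic valuation (hence the next block is at least twice as long), and finally truncating the last block at $I_2$. You instead peel off the rightmost block via the minimal reaching index $t^*$ and recurse on the left. Your structural inequality $t^*-I_1<2^{u(t^*)}$ and its proof are correct.

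The obstacle you flag at the end, however, is not merely delicate---it cannot be overcome, because the lemma as literally stated is false, and the paper's own proof has exactly the same gap. With $g=1$ and $I=[1..4]$, the DS intervals starting inside $I$ are $[1..1],[2..3],[3..3],[4..7]$; every partition of $[1..4]$ into consecutive DS prefixes must begin with $[1..1]$ and end with $[4..4]$, and both admissible partitions ($[1..1],[2..3],[4..4]$ and $[1..1],[2..2],[3..3],[4..4]$) violate condition~2 at $i=n-1$. Your construction and the paper's both output the first of these, and the paper simply truncates $\bar J^{(n)}$ without rechecking the final ratio. What both arguments \emph{do} deliver is condition~2 for $i\le n-2$; the paper's left-to-right greedy yields this directly (all blocks before the last are full DS intervals with strictly increasing valuation), whereas your right-to-left induction would need its hypothesis weakened accordingly before it closes. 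That weaker conclusion is all the downstream application needs, since $\sum_i|\bar J^{(i)}|^{1/2}$ is still $O(\sqrt{|I|})$: the possibly short final block contributes at most one extra $\sqrt{|I|}$.
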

\begin{proof}
 For simplicity, we assume $g=1$; we later explain how the analysis can be extended to $g>1$.
 Let $I_1 = 2^u\cdot k$ where $2^u$ is the largest power of 2 that divides $I_1$.
 It follows that $k$ is an odd number.

 Let $J\in \cJ$ be the data streaming interval that starts from $I_1$.
 The length $|J|$ is $2^u$ by the definition, and $J_2$ is $I_1 + 2^u - 1$.
 Define $\bar J^{(1)} := J$.

 Then, consider the next interval $J'\in\cJ$ starting from time $I_1 + 2^u$.
 Note
 $$ J'_1 = I_1 + 2^u = 2^u \cdot k + 2^u = 2^{u+1} \cdot \fr{k+1}{2}  $$
 Note that $\fr{k+1}{2}$ is an integer since $k$ is odd.
 Therefore, $J'_1 = 2^{u'} \cdot k'$ where $u' > u$.
 It follows that the length of $J'$ is
 $$ |J'| = 2^{u'} \ge 2\cdot 2^u \;. $$
 Then, define $\bar J^{(2)} := J'$.

 We repeat this process until $I$ is completely covered by $\bar J^{(1)}, \ldots \bar J^{(n)}$ for some $n$.
 Finally, modify the last interval $\bar J^{(n)}$ to end at $I_2$ which is still a prefix of some $J\in \cJ$.
 This completes the proof for $g=1$.

 For the case of $g>1$, note that by setting $g>1$ we are only making the intervals longer.
 Observe that even if $g>1$, the sequence of intervals $\bar J^{(1)},\ldots,\bar J^{(n)}$ above are still prefixes of some intervals in $\cJ$.
\end{proof}

Note that, unlike the partition induced by GC in which interval lengths successively double then successively halve, the partition induced by DS just successively doubles its interval lengths except the last interval.
One can use DS to decompose SA-Regret of $\cM\la\cB\ra$; that is, in~\eqref{regret-decomposition}, replace $\sum_{i=-a}^b$ with $\sum_{i=1}^n$ and $J^{(i)}$ with $\bar J^{(i)}$.
Since the decomposition by DS has the same effect of ``doubling lengths', one can show that Theorem~\ref{thm:untitled} holds true with DS, too, with slightly smaller constant factors.

\section{A Subtle Difference between the Geometric Covering and Data Streaming Intervals}

There is a subtle difference between the geometric covering intervals (GC) and the data streaming intervals (DS).

As far as the black-box algorithm has an anytime regret bound, both GC and DS can be used to prove the overall regret bound as in Theorem~\ref{thm:untitled}.
In our experiments, the blackbox algorithm has anytime regret bound, so using DS does not break the theoretical guarantee.

However, there exist algorithms with fixed-budget regret bounds only.
That is, the algorithm needs to know the target time horizon $T^*$, and the regret bound exists after exactly $T^*$ time steps only.
When these algorithms are used as the black-box, there is no easy way to prove Theorem~\ref{thm:untitled} with DS intervals.
The good news, still, is that most online learning algorithms are equipped with anytime regret bounds, and one can often use a technique called `doubling-trick'~\cite[Section~2.3]{cesa-bianchi06prediction} to turn an algorithm with a fixed budget regret into the one with an anytime regret bound.

%
%

\end{document}